\def\eqref#1{equation~\ref{#1}}
\def\1{\bm{1}}
\DeclareMathAlphabet{\mathsfit}{\encodingdefault}{\sfdefault}{m}{sl}
\SetMathAlphabet{\mathsfit}{bold}{\encodingdefault}{\sfdefault}{bx}{n}
\newcommand{\E}{\mathbb{E}}
\newcommand{\Cov}{\mathrm{Cov}}
\DeclareMathOperator*{\argmax}{arg\,max}
\newcommand{\expect}[2][]{\mathbb{E}_{#1} \left[#2\right]}
\newcommand{\tr}{\mathrm{Tr}}
\newtheorem{theorem}{Theorem}
\newtheorem{lemma}[theorem]{Lemma}
\newcommand*\rot{\rotatebox{90}}
\newcommand{\DD}{\mathcal D}
\newcommand{\calD}{\mathcal D}
\newcommand{\calM}{\mathcal M}
\newcommand{\calO}{\mathcal O}
\newcommand{\calY}{\mathcal Y}
\newcommand{\prm}{\theta}
\newcommand{\prmhat}{\hat{\theta}}
\newcommand{\prmmle}{\hat{\theta}_\text{ML}}
\def\trp{{\!\top\!}} %
\def\Cov{\text{Cov}}
\title{Evaluating Representations with \\Readout Model Switching}
\author{Yazhe Li \\
\texttt{\small yazhe@deepmind.com}
\And
Jorg Bornschein \\ 
\texttt{\small bornschein@deepmind.com}
\And
Marcus Hutter \\
\texttt{\small mhutter@deepmind.com}
}
\begin{document}

\maketitle

\begin{abstract}
Although much of the success of Deep Learning builds on learning good representations, a rigorous method to evaluate their quality is lacking. 
In this paper, we treat the evaluation of representations as a model selection problem and propose to use the Minimum Description Length (MDL) principle to devise an evaluation metric.
Contrary to the established practice of limiting the capacity of the readout model, we design a hybrid discrete and continuous-valued model space for the readout models 
and employ a switching strategy to combine their predictions. The MDL score takes model complexity, as well as data efficiency into account.
As a result, the most appropriate model for the specific task and representation will be chosen, making it a unified measure for comparison.
The proposed metric can be efficiently computed with an online method and we present results for 
pre-trained vision encoders of various architectures (ResNet and ViT) and objective functions (supervised and self-supervised) 
on a range of downstream tasks. We compare our methods with accuracy-based approaches and show that the latter are inconsistent when multiple readout models are used. Finally, we discuss important properties revealed by our evaluations such as model scaling, preferred readout model, and data efficiency.
\end{abstract}

\section{Introduction}

Data representation is crucial to the performance of machine learning algorithms \citep{bengio2013representation}. 
Much of the success of Deep Neural Networks (DNN) can be attributed to their capability of gradually building up more and more abstract representations \citep{lee2009convolutional}. In supervised learning, 
although the network is trained to predict a specific aspect of the input, the intermediate representations are often proven to be useful for many other downstream tasks \citep{yosinski2014transferable}. In unsupervised and self-supervised learning, 
the network is trained on a surrogate task, such as reconstruction \citep{hinton2006reducing, kingma2013auto, mae} and contrastive prediction \citep{cpc, simclr}, which is supposed to capture generic prior of the data. 
In recent years, there has been significant improvements in unsupervised representation learning with state-of-the-art models achieving performance comparable to its supervised counterpart \citep{tomasev2022pushing}. 

Despite the importance of data representation, the evaluation method for representations is rarely discussed. The most prevalent practice is to train a readout model on the downstream task. The readout model often has a shallow architecture, e.g.\ linear layer, to limit its capacity, so that the task performance reflects the representation quality. The problem with this approach is that the readout model cannot adapt to the nature of the representations. Deeper models and fine-tuning alleviate this issue. However, the representations are left with multiple metrics, each using a different readout mechanism, 
making the comparison extremely difficult \citep{nozawa2022empirical}.

In this paper, we treat evaluating representations as a model selection problem. We propose to use \emph{Minimum Description Length} (MDL) as the main evaluation metric and use model switching to accommodate the need for multiple readout models. MDL is a well-studied compression-based approach for inductive inference that provides a generic solution to the model selection problem \citep{Rissanen1984,Grunwald2004,Wallace2005,solomonoff1964formal,Hutter:11uiphil}. MDL performs a similar role as held-out validation does for \emph{Emperical Risk Minimization} \citep{Vapnik1991-gj}, but has the advantage of being able to deal with single sequence and non-stationary data. It is closely related to Bayesian model selection and includes a form of Occam's Razor where the metric takes into account the model complexity. The complexity term can be explicitly represented as the codelength of the model in the case of a 2-part code, as a KL-term when using a variational code, or implicitly when using prequential or Bayesian codes. By including the model complexity in the evaluation metric, we automatically resolve the need of limiting the readout model complexity and are able to compare MDL scores freely between different readout mechanisms. Intuitively, if the induced representation is nonlinear and requires a higher capacity model for readout, the MDL score reflects this by having a larger complexity term. Note that this also applies in the case of fine-tuning, where the pre-trained model is allowed to adapt for the downstream tasks. Model switching allows multiple readout models and automatically finds the best readout model for the downstream task at each dataset size (\Cref{fig:switching-sketch}). Therefore, MDL with readout model switching provides a unified framework for evaluating representations regardless the evaluation protocol employed.
\ifdefined\twocolumns
\begin{figure}[t]
  \includegraphics[width=0.48\textwidth]{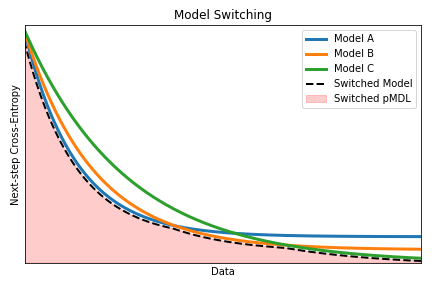}
  \vspace{-0.5cm}
  \caption{Illustration of switching between models of different complexity: %
  Depending on the number of training examples either $A$, $B$, or $C$ has the best generalization performance. 
  An optimally switched model will have the best performance at each point and 
  thus the lowest prequential description length (= area under the curve). 
} 
  \label{fig:switching-sketch}
\end{figure}
\else
\begin{wrapfigure}{l}{0.48\textwidth}
  \includegraphics[width=0.48\textwidth]{imgs/switching-sketch.png}
  \vspace{-0.5cm}
  \caption{Illustration of switching between models of different complexity: %
  Depending on the number of training examples either $A$, $B$, or $C$ has the best generalization performance. 
  An optimally switched model will have the best performance at each point and 
  thus the lowest prequential description length (= area under the curve). 
} 
\vspace{-0.5cm}
  \label{fig:switching-sketch}
\end{wrapfigure}
\fi
It is conjured that useful representations make the variability in the data more predictable and allow to efficient learning human-like data \citep{cpcv2}. The MDL evaluation metric formalizes the data efficiency perspective - especially evident in the form of prequential MDL. Prequential MDL \citep{dawid1999prequential,hutter2005asymptotics} turns computing the description length $L(\mathcal{D} | \phi) = - \log p(\mathcal{D} | \phi)$ into a sequential prediction problem: 
$\log p(\mathcal{D} | \phi) = \sum_t \log  p(y_t |\phi_{\leq t}, y_{<t})$, where $\phi$ is an encoder for feature extraction, $\phi_t:=\phi(x_t)$ an encoded input, and $(x_{<t}, y_{<t})$ is the data seen before timestep $t$. In order to achieve a short description length, it is beneficial if the representation is capable of fast learning, i.e.\ given a good representation, a few examples are enough to achieve good performance. As a by-product of our readout model switching based on MDL, we can visualize the predictive performance along the data sequence and inspect how efficient the downstream learning is.

Our contributions are as follows:\vspace{-1ex}
\begin{enumerate}\parskip=0ex\parsep=0ex\itemsep=0ex
    \item We propose readout model switching for evaluating representations.
    \item We prove a regret bound of readout model switching and discuss assumptions made.
    \item We use an online learning framework for efficient computation and evaluate the performance of several popular visual representation methods on the set of downstream tasks.
    \item We investigate and inspect the preferred readout models, data efficiency and model scaling.
\end{enumerate}

\section{Background}

\paragraph{Minimum Description Length} 
is based on the fundamental idea that learning and comprehension correspond to compression 
\citep{Hutter:11uiphil}. Given data $\calD{=}(y_t)_1^N \in \calY^N$ and a hypothesis 
space $\calM = \{M_1, M_2, \dots \}$, where each hypothesis $M$ corresponds to a parametric probabilistic model 
$p(\DD | \prm, M)$, MDL aims to identify the model that can compress the data $\DD$ best. 
Considering the close relationship between lossless coding and probability distributions,
this can be achieved by associating a codelength function $L(\DD | M){=}-\log p(\DD | M)$ 
with each hypothesis. \emph{A vast body of literature shows that models with a shorter description length have
a better chance of generalizing to future data} \citep{Wallace2005,Gruenwald:07book,Hutter:11uiphil}.

A crude way to obtain description lengths is to consider $L(\DD\,|\,M){=}L_M(\prm){+}L_M(\DD\,|\, \prm)$, where 
$L_M(\prm)$ is the cost of encoding the parameters and $L_M(\DD|\prm){=}{-}\log p(\DD|\prm, M)$ 
is the cost of compressing the data with the parameterized model.
This {\em two-part code} approach is intuitive but sub-optimal and ambiguous
because it does not specify how to encode the parameters. 
This crude MDL approach has been refined in three distinct but closely related ways:
\begin{enumerate}[label=\arabic*),leftmargin=*]
    \item The Bayesian marginal likelihood: $L_\text{Bayes}(\DD | M) := -\log \int_\prm p(\DD |  \prm, M) p(\prm) d\prm$ and its variational upper bound $L_\text{ELBO}(\DD | M) := \text{KL}(q(\prm) | p(\prm)) - \E_{\prm \sim q}{\log p(\DD | \prm, M)}$. Both  depend on the chosen prior $p(\prm)$ and require access to the posterior distribution over parameters given data; or an approximation thereof. 
    \item Normalized Maximum Likelihood (NML): $L_\text{NML}(\DD|M):{=}-\log\, [p(\DD|\prmmle(\DD), M) \, / \,$ $\int_z p(\DD | \prmmle(z), M) dz]$ only relies on the maximum likelihood estimator $\prmmle(\cdot)$, but normalizes over all potentially observable data $z \in \calY^N$, which is often intractable and even undefined for many model families of interest. The log-numerator is also called the complexity of $M$ and measures how well the model could fit all possible data.
    Under NML, a model $M$ achieves a short description length only when it fits the given data well
    (high $\log p(\calD | \prmmle(\calD), M)$) as well as not fit well many different data (low denominator). 
    \item  The prequential approach: $L_\text{plugin}(\DD | M) := -\sum_{t=1}^N \log p(y_t | \prmhat(\DD_{<t}), M)$ decomposes the description length over datapoints and relies on the choice of a suitable plug-in parameter estimator $\prmhat(\DD_{<t})$. It emphasizes that in order to achieve a short codelength, a model $M$ must not only be a good predictor given all training  data $\DD$, but already given only parts of the data $\DD_{<t}$ for all $t$, i.e.\ be a sample efficient predictor. Model complexity and sample efficiency can thus be considered two sides of the same coin.
\end{enumerate}

Approaches to computing description lengths for some data $\DD$ under a model family 
$p(\DD | \prm , M)$ are called {\em universal codes} when they result in description lengths 
that stay close to the (generally unachievable) maximum likelihood codelength for that family: 
$L_\text{universal}(\DD | M) \le  -\log p(\DD | \prmmle(\DD), M) + \calO(\log N)$. 
$L_\text{NML}(\DD | M)$ is by definition universal; $L_\text{Bayes}(\DD | M)$ and
$L_\text{plugin}(\DD | M)$ are universal for many reasonable choices of the prior $p(\prm)$ and
the estimator $\prmhat(\DD_{<t})$ respectively \citep{grunwald2007minimum}.

\paragraph{Methods for Representation Evaluation.} 
Linear probing, using a linear layer for readout, is the most prevalent evaluation protocol for representations, the performance of which is reported and compared in almost all of the recent representation learning works \citep{cpc,simclr,byol,li2021self,dino,mae}. Apart from linear probing, clustering algorithms such as Hebbian linear classifier \citep{bengio2012deep} and K-Nearest Neighbor \citep{dino} are also popular choices. However, concerns have been raised for model comparison using these simple probes. \citet{DBLP:journals/corr/abs-1912-00215} shows that the performance of linear probing can be misleading, since models performing weakly with linear probing can perform better under alternative protocols. \citet{mae} also argue that the use of linear probes limits the development of methods that induce non-linear representations. \citet{DBLP:journals/corr/abs-1909-03368} question the practice of using downstream accuracy as the evaluation metric and propose to design probes based on their selectivity, which puts the task accuracy in context with the probe’s capacity to memorize. \citet{DBLP:journals/corr/abs-1901-11373} and \citet{DBLP:journals/corr/abs-2003-12298} propose to use MDL, either variational or prequential code, as the evaluation metric, which takes into account the complexity in the final measurement. \citet{cpcv2} bring to light the benefit of pre-training for downstream data efficiency. \citet{DBLP:journals/corr/abs-2009-07368} propose a modified version of MDL that introduces a baseline corresponding to solving the downstream task. While previous works proposing MDL as an evaluation metric use only a single readout protocol, our method can be applied for combining different readout protocols regardless of the backbone being frozen or fine-tuned. 
Our use of an online framework to compute prequential MDL scores which has not been used previously to evaluate representations leads to shorter description lengths \citep{bornschein2022mdl} and crucially enables the model switching between readout-methods at each timestep. 
As a result, our framework gives insights into properties of the representations that can guide us in choosing the best way to use the representation.

\section{Evaluating Representations with Readout Model Switching}

We formally define the model selection problem for representation evaluation under the MDL principle as follows: the objective is to compare the encoder $\phi(.)$ that minimizes the codelength $L(\mathcal{D} | \phi) = - \log p(\mathcal{D} | \phi)$, where $\mathcal{D}=\{(x_t, y_t)\}_1^N$ is the inputs $x_t$ and associated prediction targets $y_t$. Prequential MDL decomposes the codelength into the cumulative prediction loss $L(\mathcal{D} | \phi) = - \sum_{t=1}^{N} \log  p(y_t |\phi_{\leq t}, y_{<t})$, where $\phi_t:=\phi(x_t)$ is the encoded feature of input $x_t$. In order to perform the prediction task from the features, previous works have to fix the probe to be a linear layer or a shallow MLP. In our study, instead of picking a fixed model, we design a hybrid of a continuous and discrete-valued $(k, \theta_k)$ model space of the readout models, where ``expert'' $k\in [1,\dots,K]$ is the model class and $\theta_k$ are the continuous-valued parameters corresponding to the model class $k$. Each readout model can be seen as an expert forecasting system that learns to predict the data independently in an online fashion. At each datapoint $x_t$, the prediction $\hat{y}_t$ is a combination of the experts' predictions. The final MDL score is the cumulative predictive performance of the board of $K$ experts.

By constructing the probe using multiple readout models and switching between them, the final MDL score is marginalized over the choice of the readout architectures. Since MDL automatically takes into account the complexity of the model, i.e.\ any extra complexity used is going to cost in the MDL score, our framework allows the representation to select the most suitable readout mechanism at any data size while still maintaining a comparable final score for the representations. In addition, after computing the score, we can inspect the posterior over the readout models at any given data size, as well as data efficiency along the trajectory, providing valuable insight into the characteristics of the representation.

\paragraph{Readout Model Switching.}
Given $K$ readout models, we define random variables $\xi_t \in [1,\dots,K], \forall t \in [1,\dots, N]$ the readout model to use at timestep $t$. The Expert Sequence (ES) prior \citep{DBLP:journals/corr/abs-0802-2015} uses a hidden Markov model (HMM) to define the joint distribution of the data $y_{1:N}$  
and random variables $\xi_{1:N}$ as $p(y_{1:N}, \xi_{1:N}|\phi)=\prod_{t=1}^{N} p_{\xi_t}(y_t|\phi_{\leq t}, y_{< t}) p(\xi_t|\xi_{t-1})$, where $p(\xi_1 | \xi_0){=}p(\xi_1)$ is the initial distribution and 
$p_{k}(\cdot)$ is the prediction of the $k$th readout model. 
At each timestep $t$, we use the previously observed data to estimate $\hat{\theta}_{k}(\phi_{< t}, y_{< t})$ for the parameters of the readout models. Therefore, the $k$th readout model prediction $p_{k}(y_t|\phi_{\leq t}, y_{< t})$ is the plugin distribution $p(y_t|\phi_t, \hat{\theta}_{k}(\phi_{< t}, y_{< t}))$. The final readout model switching codelength function has the form:
\ifdefined\twocolumns
\begin{align}
    L&_\text{Switch}(y_{1:N} | \phi) = - \log p(y_{1:N} | \phi)
    \label{eq:general_form} \\
    &= - \log \sum_{\xi_{1:N}} \prod_{t=1}^{N} p(y_t|\phi_t, \hat{\theta}_{\xi_t}(\phi_{<t}, y_{<t})) p(\xi_t|\xi_{t-1})
    \nonumber
\end{align}
\else
\begin{align}
    L_\text{Switch}(y_{1:N} | \phi) = - \log p(y_{1:N} | \phi) 
    = - \log \sum_{\xi_{1:N}} \prod_{t=1}^{N} p(y_t|\phi_t, \hat{\theta}_{\xi_t}(\phi_{<t}, y_{<t})) p(\xi_t|\xi_{t-1})
    \label{eq:general_form}
\end{align}
\fi
The readout model switching defined in \autoref{eq:general_form} represents a family of codes that combines prequential and Bayesian MDL. From the perspective of the HMM, $p(\xi_t|\xi_{t-1})$ is the transition matrix and,
in our context, it is equivalent to a switching strategy between the readout models. Every switching strategy corresponds to a specific code in the family. 
For example, if $p(\xi_t|\xi_{t-1})=1$ iff  $\xi_t=\xi_{t-1}$ (zero otherwise), i.e when we believe a single readout model is responsible for modeling the whole data sequence, 
then \autoref{eq:general_form} simplifies to a Bayesian mixture code $L_{BM}(y_{1:N} | \phi) = - \log \sum_{k} p(\xi_1=k)\prod_{t=1}^{N} p(y_t|\phi_t, \hat{\theta}_{k}(\phi_{<t}, y_{<t}))$.

For the rest of the paper, we are going to use Fixed Share (FS) \citep{Herbster98trackingthe,Herbster2004TrackingTB} as the switching strategy. The prior defined in fixed share can be interpreted as follows: at timestep $t$, with a probability 1 - $\alpha_t$, the expert is the same as in the previous timestep; with probability $\alpha_t$, it switches to another model according to the probability $w(k)$. We are going to use the uniform distribution $w(k)=1/K$ through out the paper. 
The switching strategy can be simplified and expressed as 
\iftrue %
\ifdefined\twocolumns
\begin{align}\label{eq:alphat}
p(\xi_{t}|\xi_{t-1})&=\left\{
  \begin{array}{lr}
    1 - \frac{K-1}{K}\alpha_t & \text{if}~~~ \xi_t = \xi_{t-1}\\
    \frac{1}{K}\alpha_t & \text{if}~~~ \xi_t \neq \xi_{t-1}
  \end{array}
\right. \\
&\text{where}~~ \alpha_t:=\frac{m-1}{t} \in [0, 1] \nonumber
\end{align}
\else
\begin{align}\label{eq:alphat}
p(\xi_{t}|\xi_{t-1})=\left\{
  \begin{array}{lr}
    1 - \frac{K-1}{K}\alpha_t & \text{if}~~~ \xi_t = \xi_{t-1}\\
    \frac{1}{K}\alpha_t & \text{if}~~~ \xi_t \neq \xi_{t-1}
  \end{array}
\right.~~\text{where}~~~ \alpha_t:=\frac{m-1}{t} \in [0, 1] 
\end{align}
\fi
\else %
$p(\xi_{t}|\xi_{t-1})=\left\{
  \begin{array}{lr}
    1 - \frac{K-1}{K}\alpha_t &  \text{ if  } \xi_t = \xi_{t-1}\\
    \frac{1}{K}\alpha_t &\text{ if  } \xi_t \neq \xi_{t-1}
  \end{array}
\right.$, where $\alpha_t=(m-1)/t \in [0, 1]$ 
\fi
is decreasing in timestep $t$, and $m$ is a hyperparameter.
Note that there are other switching strategies that could be used, e.g.\ elementwise mixture, fixed share with constant switching rate, switch distribution and run-length model \citep{DBLP:journals/corr/abs-0802-2015}. We provide a comparison of these strategies in \Cref{app:switching}. In general, we find that Bayesian mixture and elementwise mixture perform significantly worse than fixed share and other more sophisticated switching strategies.
The poor codelength of the Bayesian mixture is due to the catch-up phenomena \citep{Erven2012CatchingUF}: it assumes that a single model is responsible for modeling the whole data sequence, while 
evidently the best model changes with data set size. 

\paragraph{Regret Bound.}
To characterize the gap between the codelength for our readout model switching and the shortest codelength that can be achieved in hindsight, we derive a regret bound in \Cref{theorm:1}. This bound tells us whether the readout model switching converges to the optimal solution and, if so, how fast it converges.
\begin{theorem}
Let a sequence $x^N = (x_i)_{i=1}^N$ be sampled i.i.d.\ from distribution $P$. Let readout models $M_1, \dots M_K$ be in the exponential family. $\mu^*_k=\mathbb{E}_P[T_k(X)]$, where $T_k(.)$ is the sufficient statistic, is an element of the model $M_k$'s mean value parameter space. Denote $\hat{\theta}_k(x^t)$ the Maximum Likelihood (ML) estimator of $M_k$ after the first $t$ examples $x^t=(x_1, x_2, \cdots, x_t)$. The expected regret of readout model switching w.r.t.\ the best sequence of readout models $\xi^*_{1:N}$ and their best fit parameters $\hat{\theta}_{\xi^*_t}(x^N)$ is
\begin{align}
    R(N) 
    & \coloneqq \expect[P]{ \sum_{t=1}^N \log M_{\xi_t^*}(x_t; \hat{\theta}_{\xi_t^*}(x^N)) - \log \sum_{\xi_{1:N}} p(\xi_{1:N}) \prod_{t=1}^N M_{\xi_t}(x_t|\hat{\theta}_{\xi_t}(x^{t-1}))} \nonumber \\
    &\leq - \expect[P]{ \log p(\xi^*_{1:N}) } + \frac{1}{2} \max_{k} \left\{\tr\left(\Cov_{M_{\mu_{k}^*}}^{-1}[T_k(x)]\Cov_P\left[T_k(x)\right]\right) \right\} \log N + O(1)
\end{align}

\label{theorm:1}
\end{theorem}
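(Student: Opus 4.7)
The plan is to split $R(N)$ into the two advertised terms by lower-bounding the inner marginal sum by a single term at $\xi_{1:N}=\xi^*_{1:N}$, and then to control the residual regret of the plug-in MLE against the batch MLE via the classical asymptotics for (possibly misspecified) exponential families. First I would observe that since every summand in $\sum_{\xi_{1:N}} p(\xi_{1:N}) \prod_t M_{\xi_t}(x_t;\hat{\theta}_{\xi_t}(x^{t-1}))$ is nonnegative, retaining only the term at $\xi_{1:N}=\xi^*_{1:N}$ and applying $-\log$ yields
\begin{align*}
R(N) \;\leq\; -\,\expect[P]{\log p(\xi^*_{1:N})} \;+\; \expect[P]{\sum_{t=1}^N \log \frac{M_{\xi^*_t}(x_t; \hat{\theta}_{\xi^*_t}(x^N))}{M_{\xi^*_t}(x_t; \hat{\theta}_{\xi^*_t}(x^{t-1}))}}.
\end{align*}
The first piece is already the first term of the theorem, so only the second expectation needs bounding.

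For that residual, I would group timesteps by the active expert, $\mathcal{T}_k := \{t : \xi^*_t = k\}$, and, for each $k$, Taylor-expand the log-density $\log M_k(x;\theta) = \theta^\top T_k(x) - A_k(\theta) + \log h_k(x)$ around the reverse information projection $\theta_k^*$ of $P$ onto $M_k$, characterized by $\nabla A_k(\theta_k^*) = \mu_k^*$. Two classical facts then suffice: (i) White's sandwich formula gives, under misspecification, $\Cov_P[\hat{\theta}_k(x^t)] = \frac{1}{t}\, I_k(\theta_k^*)^{-1}\,\Cov_P[T_k(x)]\,I_k(\theta_k^*)^{-1} + o(1/t)$, where $I_k(\theta_k^*) = \Cov_{M_{\mu_k^*}}[T_k(x)]$ is the Fisher information; and (ii) the first-order Taylor term, linear in $T_k(x_t)-\mu_k^*$, vanishes in expectation against $\hat{\theta}_k(x^{t-1})$ by independence, and contributes only $O(1/N)$ per step against $\hat{\theta}_k(x^N)$, summing to $O(1)$. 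Substituting the sandwich variance into the expected quadratic Taylor term (whose Hessian $-I_k(\theta_k^*)$ does not depend on $x_t$) produces the per-step bound
\begin{align*}
\expect[P]{\log M_k(x_t; \hat{\theta}_k(x^N)) - \log M_k(x_t; \hat{\theta}_k(x^{t-1}))} \;\leq\; \frac{1}{2t}\,\tr\!\left(I_k(\theta_k^*)^{-1}\,\Cov_P[T_k(x)]\right) + O(t^{-3/2}).
\end{align*}

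Summing these bounds, partitioning over $k$ and replacing each trace coefficient by the maximum over $k$, gives a quantity of the form $\frac{1}{2}\max_k \tr(\ldots) \cdot \sum_{t=1}^N t^{-1}$; using $\sum_t t^{-1} = \log N + O(1)$ and $\sum_t t^{-3/2} = O(1)$ yields the second term of the theorem, with all higher-order Taylor residuals and boundary contributions from the smallest $t$ (where the MLE may be undefined or lie near the boundary) absorbed into the final $O(1)$. The hardest step is controlling the cubic Taylor remainder uniformly in $t$: pointwise asymptotic normality of the misspecified MLE is standard, but summing the expected regret over all $t$ requires regularity conditions on $P$ and on the $M_k$ (finite moments of $T_k$, invertibility of $I_k(\theta_k^*)$, and sufficient compactness of the parameter space) strong enough to bound $\expect[P]{\|\hat{\theta}_k(x^{t-1}) - \theta_k^*\|^3}$ at a summable rate. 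The theorem should be read as implicitly carrying these assumptions.
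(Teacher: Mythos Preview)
Your plan is correct and tracks the paper's proof closely: the paper also drops all but the $\xi^*_{1:N}$ term from the mixture, inserts and subtracts $\log M_{\xi^*_t}(x_t;\theta^*_{\xi^*_t})$, and then controls the two resulting pieces separately (their Lemma~\ref{app:eval:lemma_const} shows the batch-MLE-vs.-$\theta^*$ part is $O(1)$, and Lemma~\ref{app:eval:lemma_term2} Taylor-expands the $\theta^*$-vs.-plugin part to get the $\tfrac{1}{2}\max_k\tr(\cdot)\log N$ bound via the harmonic sum). Your one-shot Taylor expansion of both batch and plugin around $\theta^*_k$ is the same computation, just not split into two lemmas.

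The one genuine difference worth noting is parameterization. You work in natural parameters and invoke White's sandwich formula for $\Cov_P[\hat{\theta}_k(x^t)]$, which is only an asymptotic statement and forces you to worry about the delta-method remainder. The paper instead reparameterizes by the mean-value parameter $\mu$, where the MLE is \emph{exactly} the sample mean $\hat{\mu}_k(x^t)=\tfrac{1}{t}\sum_i T_k(x_i)$, so $\Cov_P[\hat{\mu}_k(x^t)]=\tfrac{1}{t}\Cov_P[T_k]$ holds as an identity and the Fisher information in that chart is $\Cov_{M_{\mu^*_k}}^{-1}[T_k]$. This buys them the trace constant without any sandwich asymptotics; the only approximation left is the second-order Taylor truncation of the KL, which is the same residual you flag at the end. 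If you adopt the mean-value parameterization your argument simplifies in exactly the place you identified as hardest.
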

The proof of \Cref{theorm:1} can be found in \Cref{app:proof}. The expected regret consists of two terms: the regret of the switching strategy and the regret from the plugin distribution. The latter is bounded by $C\log N$ with $C$ being a constant determined by the model family and the true distribution. For Fixed Share with decreasing switching rate, $ - \mathbb{E}_{P} \left[ \log p(\xi^*_{1:N}) \right] \leq m \log K + (m - 1) \log N/(m - 1)$, where $m - 1$ is the number of switch points \citep{Hutter:12ctswitch,DBLP:journals/corr/KoolenR13}. Therefore, the expected regret is of the order of $\log N=o(N)$, making the corresponding code a universal code. Many other switching strategies have the same property, such as Bayesian mixture and run-length code \citep{DBLP:journals/corr/KoolenR13}.

Like many theories and algorithms applied to DNN, some of the assumptions of \Cref{theorm:1} do not hold in practice: DNNs are not in the exponential family and the solutions found by Stochastic Gradient Descent (SGD) training are not ML solutions. Although the exact nature of the final codelength heavily relies on the training algorithm and much of the understanding of SGD is left for future works, as long as the estimator converges asymptotically to the optimal solution, we can use the code for model selection. In fact, there exist estimators that outperform the ML estimator such that the resulting prequential code better approximates the Normalized Maximum Likelihood (NML) code which is the optimal universal code \citep{DBLP:journals/corr/abs-1002-0757}.

\paragraph{Computing MDL with Online Learning.} \label{sec:implementation}
\ifdefined\twocolumns

\begin{algorithm*}[h]
\caption{MDL with Readout Model Switching}
\begin{algorithmic}[1]
  \REQUIRE data $\mathcal{D}=(x_t, y_t)_{t=1}^N$; $K$ readout models; initial model probability $p(\xi_1)$ and switching strategy $p(\xi_t|\xi_{t-1})$; subroutine that updates the parameters given a dataset
  \emph{UpdateParams}
  \STATE Initialize: model parameters $\theta_1, \dots, \theta_K$; 
  $s=\log p(\xi_1)$; \\
  an empty list $Q$ to store posterior $p(\xi_t|\calD_{<t})$
  \FOR{$t = 1$ \TO $N$}
    \STATE Compute $\log p(\xi_t, y^{t-1}|x^{t-1})$: $s \leftarrow \log \sum_{\xi_{t-1}} \exp \left(\log p(\xi_{t}|\xi_{t-1}) + s \right)$
    \STATE Compute posterior $p(\xi_t|\calD_{<t})$ and store in $Q$: $\log p(\xi_t|\calD_{<t})=\log p(\xi_t, y^{t-1}|x^{t-1}) - \log \sum_{\xi_t} p(\xi_t, y^{t-1}|x^{t-1})$
    \STATE Compute next-step loss of $K$ readout models: $\mathcal{L}_t^k := -\log p_{k}(y_t|x_t) = -\log p(y_t| x_t; \theta_k)$
    \STATE Combine $K$ models to update $\log p(\xi_t, y^t| x^t)$: $s \leftarrow -\mathcal{L}_t^{\xi_t} + s$
    \FOR{$k = 1$ \TO $K$} 
        \STATE Update: $\theta_k \leftarrow$ \emph{UpdateParams}($\theta_k$, $\mathcal{D}_{\leq t}$)
    \ENDFOR
  \ENDFOR
  
  \STATE Compute total codelength $\mathcal{L}_\text{switch} \leftarrow - \log \sum_{\xi_N} \exp(s)$
  \RETURN $\mathcal{L}_\text{switch}$ and $Q$
\end{algorithmic}
\label{alg:single_stage}
\end{algorithm*}
\else
\begin{algorithm}[b!]
\caption{MDL with Readout Model Switching}
\begin{algorithmic}[1]
  \REQUIRE data $\mathcal{D}=(x_t, y_t)_{t=1}^N$; $K$ readout models; initial model probability $p(\xi_1)$ and switching strategy $p(\xi_t|\xi_{t-1})$; subroutine that updates the parameters given a dataset \emph{UpdateParameters}
  \STATE Initialize: model parameters $\theta_1, \dots, \theta_K$; $s=\log p(\xi_1)$; an empty list $Q$ to store posterior $p(\xi_t|\calD_{<t})$
  \FOR{$t = 1$ \TO $N$}
    \STATE Compute $\log p(\xi_t, y^{t-1}|x^{t-1})$: $s \leftarrow \log \sum_{\xi_{t-1}} \exp \left(\log p(\xi_{t}|\xi_{t-1}) + s \right)$
    \STATE Compute posterior $p(\xi_t|\calD_<t)$ and store in $Q$: \\
    $\log p(\xi_t|\calD_<t)=\log p(\xi_t, y^{t-1}|x^{t-1}) - \log \sum_{\xi_t} p(\xi_t, y^{t-1}|x^{t-1})$
    \STATE Compute next-step loss of $K$ readout models: $\mathcal{L}_t^k := -\log p_{k}(y_t|x_t) = -\log p(y_t| x_t; \theta_k)$
    \STATE Combine $K$ models to update $\log p(\xi_t, y^t| x^t)$: $s \leftarrow -\mathcal{L}_t^{\xi_t} + s$
    \FOR{$k = 1$ \TO $K$} 
        \STATE Update parameters: $\theta_k \leftarrow$ \emph{UpdateParameters}($\theta_k$, $\mathcal{D}_{\leq t}$)
    \ENDFOR
  \ENDFOR
  
  \STATE Compute total codelength $\mathcal{L}_\text{switch} \leftarrow - \log \sum_{\xi_N} \exp(s)$
  \RETURN $\mathcal{L}_\text{switch}$ and $Q$
\end{algorithmic}
\label{alg:single_stage}
\end{algorithm}
\fi

\Cref{alg:single_stage} describes the online implementation for computing the description length and for performing inference over 
the preferred readout models $p(\xi_{t} | \calD_{<t})$. 
For each example in the sequence we first obtain a marginal prediction by weighting the predictions from the readout models according to our current belief over the mixture. 
After receiving the label $y_t$, we compute the cross-entropy loss for the marginal prediction and add it to the cumulative loss. 
Then, the belief over the mixture is updated and the new example is added to the training data for readout model parameter estimation.
The latter is done by online SGD with replay. 

For the posterior belief over the model switches $p(\xi_t | \calD_{<t})$ the above procedure amounts to the well known, efficient, 
and exact forward pass HMM inference algorithm. For the neural network parameters $\prm_k$ on the other hand we use SGD as 
the plugin estimator for prequential MDL. Prior work has shown that SGD is an effective and reliable plug-in estimator for 
modern neural networks \citep{DBLP:journals/corr/abs-1802-07044}.
Note that most prior works on computing prequential MDL scores with neural networks use a block-wise approximation; i.e.\ they partition   
the data into a small number of segments and evaluate each segment independently by training on all the previous segments to convergence
\citep{DBLP:journals/corr/abs-2003-12298,bornschein2020small,DBLP:journals/corr/abs-2009-07368}.
Since we need a prediction at each step $t$ to update the mixture weights, we here instead use the online mini-batch incremental 
training with replay described in \cite{bornschein2022mdl}. This approach updates the parameters continually with online mini-batch
gradient descent training on the new and previously seen examples. 
Instead of using a replay buffer to store examples in-memory, we use replay streams to iterate through the examples in their original order from permanent storage. 
By tuning the number of streams and the reset probability, we can adjust the amount of replay training and change from uniform sampling to a sampling strategy that is biased towards more recent samples. 
However, in this work, we use uniform random replay samples for all our experiments. 
As suggested in \cite{bornschein2022mdl}, we use Exponential Moving Average (EMA) and label smoothing to improve performance of 
the online training, but not weight standardization as we don't want to alter the architectures.

A strictly equivalent alternative to the fully online implementation is a two-stage implementation (see \Cref{app:2_stage}): First, the prequential online parameter estimation is computed 
for each of the readout models independently while the per-step predictions and log-losses are recorded for later analysis. 
In a second stage, the predictions and losses can be used to compute the mixtures, switching posterior $p(\xi_{t} | \calD_{<t})$ and 
overall description lengths. To facilitate experimenting with different switching strategies we use 
this 2-stage implementation for all our experiments.

\raggedbottom
\section{Experiments}
First, we compare our evaluation framework with linear and a combination of MLP evaluations using vision encoders pre-trained on ImageNet \citep{deng2009imagenet} with different objectives. For this purpose, we use the VTAB benchmark \citep{DBLP:journals/corr/abs-1910-04867} as downstream tasks. Then, we use ImageNet classification as downstream task to showcase the insights, such as data efficiency and preferred readout model, and for comparing in-domain and out-of-domain transfer, revealed by our evaluation method.

The VTAB benchmark consists of 19 datasets that can be grouped into three categories: natural, specialized and structured. For some VTAB datasets, notably KittiDistance, SmallNORBAzimuth, PatchCamelyon, we discover that there is a distribution shift between train and test subsets 
(see details in \Cref{app:shift}). Therefore, in order to make sure that data are truly i.i.d, we reshuffle the train and test data and use the reshuffled datasets for all our experiments.

To summarize the performance over multiple datasets, we follow the procedure recommended by \citet{demvsar2006statistical} and report the average rank of the representations: for each dataset, we rank the representations by the evaluation metric (accuracy or MDL score); then, we compute the average rank of a representation over the 19 datasets.
For any pair of representations $A$ and $B$, the difference of the average rank can serve as a statistic for a hypothesis test with H0: $A$ performs the same as $B$. If the hypothesis is rejected, we conclude the representations perform significantly different. We adopt the Nemenyi test \citep{nemenyi1963distribution} which suggests that the null-hypothesis is rejected if the average ranks differ by at least a critical value. This critical value is
computed as $q_{\gamma}\sqrt{r(r+1)/6N}$, where $q_\gamma$ is a constant depending on the confidence level $\gamma$, $r$ is the number of representations to compare and $N$ is the number of datasets. In the following comparisons, we also report the critical value with a confidence level of $\gamma=0.1$.%

We use the online learning framework as described in \Cref{sec:implementation} and compute the Fixed Share code length with decreasing switching rate $\alpha_t$ defined in (\ref{eq:alphat}). 
The raw MDL scores per dataset are reported in \Cref{app:mdl_score}. 
For ImageNet, we use readout model switching between 4 model architectures: Linear and 1-3 layer MLPs. For VTAB, we include up to 7-layer MLP's as readout architectures.
We furthermore instantiate each of these architectures with all hyperparameter settings from the hyperparameter grid outlined in \Cref{app:hyperparams}. 
The model switching is therefore performed between $K=\# \text{architectures} \times \# \text{hyperparameters}$ many independent models. 
We report results using a switching rate $m=2$ for ImageNet and $m=11$ for VTAB. 
The switching rates were chosen to optimize the overall description length.

\paragraph{Pre-training Objectives.} \label{sec:pretrain_obj}
We compare downstream performance of different pre-training objectives on the VTAB benchmark. 
To make the comparison, we choose two backbone architectures, ResNet-50 v1 \citep{resnet} and ViT/B16 \citep{vit}, and only change the pre-training objectives. For ResNet-50, we pre-train on supervised SimCLR and BYOL losses; for ViT/B16, we pre-train on supervised \citep{improved_vit,mae} DINO and MAE losses. SimCLR \citep{simclr} is a self-supervised method based on contrastive learning. BYOL \citep{byol} and DINO \citep{dino} both use a teacher network. BYOL is a negative-free method for self-supervised representation, while DINO uses the ViT architecture and learns through self-distillation. MAE \citep{mae} uses masked reconstruction tasks to learn representation.

\ifdefined\twocolumns
\begin{table}[h]
    \centering
    \centering
    \setlength\tabcolsep{4pt}
    \begin{tabular}{ll|cc|cc|cc}\toprule 
    Architecture & \!\!Objective & Linear & \# & MLP & \# & MDL & \# \\
    \midrule
    ViT/B16	&	DINO	&	2.26	&	1	&	1.84	&	1	&	1.89	&	1	\\
    ResNet-50	&	BYOL	&	2.37	&	2	&	2.63	&	2	&	2.53	&	2	\\
    ViT/B16	&	MAE	&	3.84	&	4	&	3.24	&	3	&	3.79	&	3	\\
    ViT/B16	&	SUP	&	4.18	&	5	&	4.21	&	4	&	4.16	&	4	\\
    ResNet-50	&	SimCLR	&	4.53	&	6	&	4.63	&	6	&	4.21	&	5	\\
    ResNet-50	&	SUP	&	3.82	&	3	&	4.45	&	5	&	4.42	&	6	\\
    \bottomrule
    \end{tabular}
    \\[1ex]
    \caption{Comparison of pre-training Objectives: ResNet and ViT with different pre-training objective and model sizes. 
    The threshold of the rank difference to be significant for hypothesis testing is $1.894$ (left) and $3.65$ (right).}
    \label{tab:pretrain_obj}
\end{table}
\begin{table}
    \centering
    \setlength\tabcolsep{3.5pt}
    \begin{tabular}{ll|cc|cc|cc}\toprule
    Architecture & Obj. & Linear& \#& MLP & \# & MDL & \# \\    
    \midrule
    ResNet-101	&	BYOL	&	2.84	&	1	&	3.92	&	3	&	3.05	&	1	\\
    ResNet-152	&	BYOL	&	3.11	&	2	&	3.71	&	2	&	3.26	&	2	\\
    ViT/L16	&	MAE	&	3.42	&	3	&	2.53	&	1	&	3.74	&	3	\\
    ResNet-50	&	BYOL	&	4.24	&	4	&	4.26	&	4	&	4.05	&	4	\\
    ViT/B16	&	SUP	&	7.34	&	9	&	6.47	&	6	&	6.00	&	5	\\
    ViT/B16	&	MAE	&	6.89	&	5	&	5.34	&	5	&	6.21	&	6	\\
    ResNet-50	&	SUP	&	6.89	&	5	&	7.55	&	9	&	7.11	&	7	\\
    ResNet-101	&	SUP	&	6.95	&	7	&	7.47	&	8	&	7.21	&	8	\\
    ResNet-152	&	SUP	&	7.47	&	10	&	7.00	&	7	&	7.47	&	9	\\
    ViT/L16	&	SUP	&	7.05	&	8	&	8.50	&	10	&	9.11	&	10	\\
    ViT/S16	&	SUP	&	10.26	&	11	&	10.61	&	11	&	10.37	&	11	\\
    ViT/S16	&	MAE	&	11.53	&	12	&	10.63	&	12	&	10.42	&	12	\\
    \bottomrule
    \end{tabular}
    \caption{Comparison of model selection using linear evaluation, combination of evaluation with MLPs and MDL with readout model switching.         Average rank (lower is better) for representations over 19 VTAB datasets is reported. Architecture + pre-training objectives are in order of MDL rank.}
    \label{tab:scaling}
\end{table}
\else
\begin{table}[h]
    \centering
    \begin{subtable}[t]{.48\textwidth}
    \centering
    \fontsize{7}{8}\selectfont
    \setlength\tabcolsep{4pt}
    \caption{Pre-training Objectives}\label{tab:pretrain_obj}
    \begin{tabular}{ll|cc|cc|cc}\toprule 
    Architecture & \!\!Objective & Linear & \# & MLP & \# & MDL & \# \\
    \midrule
    ViT/B16	&	DINO	&	2.26	&	1	&	1.84	&	1	&	1.89	&	1	\\
    ResNet-50	&	BYOL	&	2.37	&	2	&	2.63	&	2	&	2.53	&	2	\\
    ViT/B16	&	MAE	&	3.84	&	4	&	3.24	&	3	&	3.79	&	3	\\
    ViT/B16	&	SUP	&	4.18	&	5	&	4.21	&	4	&	4.16	&	4	\\
    ResNet-50	&	SimCLR	&	4.53	&	6	&	4.63	&	6	&	4.21	&	5	\\
    ResNet-50	&	SUP	&	3.82	&	3	&	4.45	&	5	&	4.42	&	6	\\
    \bottomrule
    \end{tabular}
    \\[1ex]
    ResNet and ViT with different pre-training objective and model sizes. 
    The threshold of the rank difference to be significant for hypothesis testing is $1.894$ (left) and $3.65$ (right).
    \end{subtable}\hfill
    \begin{subtable}[t]{.48\textwidth}
    \centering
    \fontsize{7}{6}\selectfont
    \setlength\tabcolsep{4pt}
    \caption{Model Sizes}\label{tab:scaling}
    \begin{tabular}{ll|cc|cc|cc}\toprule
    Architecture & Obj. & Linear& \#& MLP & \# & MDL & \# \\    
    \midrule
    ResNet-101	&	BYOL	&	2.84	&	1	&	3.92	&	3	&	3.05	&	1	\\
    ResNet-152	&	BYOL	&	3.11	&	2	&	3.71	&	2	&	3.26	&	2	\\
    ViT/L16	&	MAE	&	3.42	&	3	&	2.53	&	1	&	3.74	&	3	\\
    ResNet-50	&	BYOL	&	4.24	&	4	&	4.26	&	4	&	4.05	&	4	\\
    ViT/B16	&	SUP	&	7.34	&	9	&	6.47	&	6	&	6.00	&	5	\\
    ViT/B16	&	MAE	&	6.89	&	5	&	5.34	&	5	&	6.21	&	6	\\
    ResNet-50	&	SUP	&	6.89	&	5	&	7.55	&	9	&	7.11	&	7	\\
    ResNet-101	&	SUP	&	6.95	&	7	&	7.47	&	8	&	7.21	&	8	\\
    ResNet-152	&	SUP	&	7.47	&	10	&	7.00	&	7	&	7.47	&	9	\\
    ViT/L16	&	SUP	&	7.05	&	8	&	8.50	&	10	&	9.11	&	10	\\
    ViT/S16	&	SUP	&	10.26	&	11	&	10.61	&	11	&	10.37	&	11	\\
    ViT/S16	&	MAE	&	11.53	&	12	&	10.63	&	12	&	10.42	&	12	\\
    \bottomrule
    \end{tabular}
    \end{subtable}
    \caption{Comparison of model selection using linear evaluation, combination of evaluation with MLPs and MDL with readout model switching.         Average rank (lower is better) for representations over 19 VTAB datasets is reported. Architecture + pre-training objectives are in order of MDL rank.}
    \label{tab:ranks}
\end{table}
\fi

\Cref{tab:pretrain_obj} reports the average rank of the pre-training objectives comparing model selection with different evaluation methods. As baselines, we report linear evaluation and a combination of MLPs up to 7-layers. To combine results from MLPs, we take the best performance achieved by MLPs.
Given a confidence interval $\gamma=0.1$, the performance of the representations are significantly different if the difference is above $q_{0.1}\sqrt{r(r+1)/6N}=1.894$, where $q_{0.1}=3.12$ is the critical value for the two-tailed Nemenyi test, and $r=6$ is the number of representations to compare and $N=19$ is the number of datasets.

Despite fixing the backbone architecture, the choice of readout model has an impact on model selection under accuracy-based evaluations (first two methods in \autoref{tab:pretrain_obj}). This is consistent with previous observations \citep{DBLP:journals/corr/abs-1912-00215,mae} and demonstrates the importance of incorporating a multitude of readout methods into the evaluation framework. Our method solves this issue by combining probing protocols and taking into account their complexity. 
Furthermore, under our evaluation framework the performance in the small data regime is also an important factor and representations performing worse with little data are penalized in the final rank. We observe that for ResNet50 backbone, BYOL features outperforms both SUPervised and SimCLR features, while SimCLR and SUPervised features have no significant difference. For ViT/B16 backbone, DINO features outperform both supervised and MAE, while supervised and MAE have no significant difference. Therefore, it appears that with the same encoder capacity, distillation-based objectives have an advantage in terms of transfer. DINO seems slightly better than BYOL, but the rank difference is not statistically significant for a clear conclusion.

In \Cref{fig:avg_rankings_per_group}, we investigate the transfer performance under distribution shift by reporting statistics under each VTAB group. We observe that self-supervised representations have a better performance than the supervised counterpart on structured datasets, despite the latter perform decently on natural datasets. MAE generalizes well with dataset shift, but the performance on natural datasets suffers compared to other methods.

\ifdefined\twocolumns
\begin{table}
    \centering
    \small
    \setlength\tabcolsep{3.5pt}
    \begin{tabular}{ll|cc|cc|cc}\toprule
    Architecture & Obj. & Natural & \#& Special &\#& Structured &\# \\
    \midrule
    ViT/B16	&	DINO	&	1.57	&	1	&	1.00	&	1	&	2.63	&	3	\\
    ResNet50	&	BYOL	&	2.71	&	2	&	3.00	&	2	&	2.13	&	1	\\
    ViT/B16	&	MAE	&	5.71	&	6	&	3.25	&	3	&	2.38	&	2	\\
    ViT/B16	&	SUP	&	2.86	&	3	&	3.25	&	4	&	5.75	&	6	\\
    ResNet50	&	SimCLR	&	4.29	&	5	&	5.75	&	6	&	3.38	&	4	\\
    ResNet50	&	SUP	&	3.86	&	4	&	4.75	&	5	&	4.75	&	5	\\
    \bottomrule
    \end{tabular}
    \caption{Average rank on natural, special and structured categories of VTAB datasets. The thresholds for the rank difference to be significant are $3.12$, $2.92$ and $4.13$ respectively. Architecture + pre-training objectives are in order of MDL rank on all VTAB tasks.}
\end{table}
\else
\begin{wraptable}{l}{0.55\textwidth}
    \centering
    \fontsize{7}{6}\selectfont
        \setlength\tabcolsep{4pt}
    \begin{tabular}{ll|cc|cc|cc}\toprule
    Architecture & Obj. & Natural & \#& Special &\#& Structured &\# \\
    \midrule
    ViT/B16	&	DINO	&	1.57	&	1	&	1.00	&	1	&	2.63	&	3	\\
    ResNet50	&	BYOL	&	2.71	&	2	&	3.00	&	2	&	2.13	&	1	\\
    ViT/B16	&	MAE	&	5.71	&	6	&	3.25	&	3	&	2.38	&	2	\\
    ViT/B16	&	SUP	&	2.86	&	3	&	3.25	&	4	&	5.75	&	6	\\
    ResNet50	&	SimCLR	&	4.29	&	5	&	5.75	&	6	&	3.38	&	4	\\
    ResNet50	&	SUP	&	3.86	&	4	&	4.75	&	5	&	4.75	&	5	\\
    \bottomrule
    \end{tabular}
    \caption{Average rank on natural, special and structured categories of VTAB datasets. The thresholds for the rank difference to be significant are $3.12$, $2.92$ and $4.13$ respectively. Architecture + pre-training objectives are in order of MDL rank on all VTAB tasks.}
    \label{fig:avg_rankings_per_group}
\end{wraptable}
\fi
\paragraph{Model Scaling.} \label{sec:scaling}
Another interesting aspect about architecture choice is the model scaling: do larger models result in better performance? We again use ResNet and ViT backbones pre-trained on ImageNet and evaluate on VTAB, but with increasing model sizes. For ResNet, we select ResNet-50, ResNet-101 and ResNet-152, and pre-train with either SUPervised or BYOL objective. For ViT, we use ViT/S16, ViT/B16 and ViT/L16, and pre-train with either SUPervised or MAE objective. We rank the performance of the 12 representations and report the average ranks in \Cref{tab:scaling}. With confidence interval of $0.1$, we obtain a threshold of $3.65$ for the difference in average rank to make a statistical claim. 

The scaling experiment more compellingly demonstrates the difference of multiple readout models and the emphasis on performance in the small data regime with our method (\Cref{tab:scaling}). Larger models, such as ResNet-152 and ViT/L16, which perform better with MLP head need to demonstrate their performance in the small data regime to justify the additional capacity used for the readout.
We can see that scaling up the ResNet architecture, in particular under supervised pre-training, doesn't result in any improvement in the transfer performance. There is a weak positive scaling effect for BYOL when scaling to ResNet-101, but it is not statistically significant. Although ViT/S16 has approximately the same parameter counts as ResNet-50, its predictive performance is much worse, suggesting that ViT's capacity is used less efficiently than ResNet's. Scaling up ViT from S16 to B16 leads to a significant performance improvement. However, further scaling up for supervised loss is not useful. Interestingly, the reconstruction-based MAE objective scales well to larger architectures and has not saturated at ViT/L16.

\paragraph{Readout Models.}
\ifdefined\twocolumns
\begin{table}
    \fontsize{8.7}{9}\selectfont
    \setlength\tabcolsep{1pt}
    \begin{tabular}{llccccccccccccccccccc}\toprule
     & & \multicolumn{7}{c}{Natural} & \multicolumn{4}{c}{Special} & \multicolumn{8}{c}{Structured}
    \\\cmidrule(lr){3-9}\cmidrule(lr){10-13}\cmidrule(lr){14-21}
    \rot{Architecture} & \rot{Objective} & \rot{Caltech101} & \rot{Cifar100} & \rot{DTD} & \rot{Flowers102} & \rot{Pets} & \rot{SVHN} & \rot{Sun397} & \rot{EuroSAT} & \rot{PatchCamelyon} & \rot{Resisc45} & \rot{Retinopathy} & \rot{ClevrCount} & \rot{ClevrDistance} & \rot{DMLab} & \rot{DSpritesLocation} & \rot{DSpritesOrientation} & \rot{KittiDistance} & \rot{SmallNORBAzimuth} & \rot{SmallNORBElevation}\\
    \midrule
    ResNet50 & SUP & 1 & 1 & 1 & 1 & 1 & 1 & 1 & 1 & 3 & 1 & 3 & 4 & 3 & 3 & 4 & 2 & 4 & 3 & 2\\
    ResNet101 & SUP & 1 & 1 & 3 & 1 & 0 & 1 & 1 & 1 & 2 & 1 & 2 & 3 & 3 & 2 & 2 & 2 & 4 & 3 & 2\\
    ResNet152 & SUP & 1 & 1 & 1 & 2 & 0 & 1 & 1 & 1 & 2 & 1 & 2 & 3 & 4 & 2 & 1 & 2 & 5 & 3 & 7\\
    ResNet50& SimCLR & 1 & 1 & 1 & 1 & 3 & 2 & 1 & 1 & 4 & 1 & 2 & 7 & 3 & 5 & 3 & 3 & 4 & 7 & 7\\
    ResNet50 & BYOL & 1 & 1 & 1 & 1 & 1 & 2 & 1 & 1 & 6 & 1 & 3 & 4 & 5 & 5 & 4 & 3 & 7 & 6 & 7\\
    ResNet101 & BYOL & 1 & 1 & 1 & 1 & 1 & 2 & 1 & 1 & 3 & 1 & 3 & 4 & 6 & 5 & 3 & 3 & 6 & 6 & 7\\
    ResNet152 & BYOL & 1 & 1 & 1 & 1 & 0 & 2 & 1 & 1 & 3 & 1 & 3 & 5 & 4 & 4 & 2 & 3 & 4 & 6 & 3\\
    ViT/S16 & SUP & 1 & 1 & 1 & 1 & 1 & 1 & 0 & 1 & 4 & 1 & 3 & 3 & 2 & 2 & 4 & 5 & 5 & 7 & 4\\
    ViT/B16 & SUP & 0 & 1 & 2 & 0 & 1 & 1 & 1 & 1 & 3 & 1 & 3 & 7 & 6 & 2 & 3 & 3 & 4 & 3 & 7\\
    ViT/L16 & SUP & 1 & 1 & 2 & 1 & 2 & 0 & 1 & 1 & 3 & 2 & 2 & 7 & 7 & 3 & 1 & 3 & 5 & 4 & 7\\
    ViT/B16 & DINO & 0 & 0 & 1 & 0 & 1 & 1 & 0 & 1 & 3 & 1 & 4 & 5 & 5 & 6 & 3 & 3 & 3 & 3 & 7\\
    ViT/S16 & MAE & 1 & 1 & 1 & 1 & 2 & 2 & 1 & 1 & 3 & 1 & 1 & 2 & 2 & 6 & 7 & 4 & 7 & 6 & 5\\
    ViT/B16 & MAE & 0 & 1 & 1 & 1 & 2 & 1 & 0 & 1 & 2 & 1 & 1 & 3 & 4 & 4 & 4 & 6 & 5 & 7 & 4\\
    ViT/L16 & MAE & 1 & 1 & 1 & 1 & 2 & 1 & 1 & 1 & 2 & 1 & 1 & 4 & 4 & 4 & 4 & 5 & 6 & 5 & 6\\
    \bottomrule
    \end{tabular}
    \caption{Readout model that most often has the highest probability $\argmax_k \frac{1}{T} \smash{\sum_{t=1}^T} p(\xi_t=k|\mathcal{D}_{<t})$ on 19 VTAB datasets. 0 is linear readout; 1 to 7 are MLPs with 1 to 7 hidden layers.}
    \label{tab:readout}
\end{table}
\else
\begin{wrapfigure}{r}{0.5\textwidth}
    \centering\vspace{-3ex}
    \fontsize{7}{7}\selectfont
    \caption{Readout model that most often has the highest probability $\argmax_k \frac{1}{T} \smash{\sum_{t=1}^T} p(\xi_t=k|\mathcal{D}_{<t})$ on 19 VTAB datasets. 0 is linear readout; 1 to 7 are MLPs with 1 to 7 hidden layers.}\setlength\tabcolsep{1pt}
    \begin{tabular}{llccccccccccccccccccc}\toprule
     & & \multicolumn{7}{c}{Natural} & \multicolumn{4}{c}{Special} & \multicolumn{8}{c}{Structured}
    \\\cmidrule(lr){3-9}\cmidrule(lr){10-13}\cmidrule(lr){14-21}
    \rot{Architecture} & \rot{Objective} & \rot{Caltech101} & \rot{Cifar100} & \rot{DTD} & \rot{Flowers102} & \rot{Pets} & \rot{SVHN} & \rot{Sun397} & \rot{EuroSAT} & \rot{PatchCamelyon} & \rot{Resisc45} & \rot{Retinopathy} & \rot{ClevrCount} & \rot{ClevrDistance} & \rot{DMLab} & \rot{DSpritesLocation} & \rot{DSpritesOrientation} & \rot{KittiDistance} & \rot{SmallNORBAzimuth} & \rot{SmallNORBElevation}\\
    \midrule
    ResNet50 & SUP & 1 & 1 & 1 & 1 & 1 & 1 & 1 & 1 & 3 & 1 & 3 & 4 & 3 & 3 & 4 & 2 & 4 & 3 & 2\\
    ResNet101 & SUP & 1 & 1 & 3 & 1 & 0 & 1 & 1 & 1 & 2 & 1 & 2 & 3 & 3 & 2 & 2 & 2 & 4 & 3 & 2\\
    ResNet152 & SUP & 1 & 1 & 1 & 2 & 0 & 1 & 1 & 1 & 2 & 1 & 2 & 3 & 4 & 2 & 1 & 2 & 5 & 3 & 7\\
    ResNet50& SimCLR & 1 & 1 & 1 & 1 & 3 & 2 & 1 & 1 & 4 & 1 & 2 & 7 & 3 & 5 & 3 & 3 & 4 & 7 & 7\\
    ResNet50 & BYOL & 1 & 1 & 1 & 1 & 1 & 2 & 1 & 1 & 6 & 1 & 3 & 4 & 5 & 5 & 4 & 3 & 7 & 6 & 7\\
    ResNet101 & BYOL & 1 & 1 & 1 & 1 & 1 & 2 & 1 & 1 & 3 & 1 & 3 & 4 & 6 & 5 & 3 & 3 & 6 & 6 & 7\\
    ResNet152 & BYOL & 1 & 1 & 1 & 1 & 0 & 2 & 1 & 1 & 3 & 1 & 3 & 5 & 4 & 4 & 2 & 3 & 4 & 6 & 3\\
    ViT/S16 & SUP & 1 & 1 & 1 & 1 & 1 & 1 & 0 & 1 & 4 & 1 & 3 & 3 & 2 & 2 & 4 & 5 & 5 & 7 & 4\\
    ViT/B16 & SUP & 0 & 1 & 2 & 0 & 1 & 1 & 1 & 1 & 3 & 1 & 3 & 7 & 6 & 2 & 3 & 3 & 4 & 3 & 7\\
    ViT/L16 & SUP & 1 & 1 & 2 & 1 & 2 & 0 & 1 & 1 & 3 & 2 & 2 & 7 & 7 & 3 & 1 & 3 & 5 & 4 & 7\\
    ViT/B16 & DINO & 0 & 0 & 1 & 0 & 1 & 1 & 0 & 1 & 3 & 1 & 4 & 5 & 5 & 6 & 3 & 3 & 3 & 3 & 7\\
    ViT/S16 & MAE & 1 & 1 & 1 & 1 & 2 & 2 & 1 & 1 & 3 & 1 & 1 & 2 & 2 & 6 & 7 & 4 & 7 & 6 & 5\\
    ViT/B16 & MAE & 0 & 1 & 1 & 1 & 2 & 1 & 0 & 1 & 2 & 1 & 1 & 3 & 4 & 4 & 4 & 6 & 5 & 7 & 4\\
    ViT/L16 & MAE & 1 & 1 & 1 & 1 & 2 & 1 & 1 & 1 & 2 & 1 & 1 & 4 & 4 & 4 & 4 & 5 & 6 & 5 & 6\\
    \bottomrule
    \end{tabular}
    \label{tab:readout}
\end{wrapfigure}
\fi

One strength of our evaluation framework is that it can compare readout models of different capacity and dynamically choose the most suitable one based on downstream task and training data size. 
We compute the posterior over the readout models to inspect which readout model the representation prefers. We observe that model switching is more frequent for out-of-domain datasets than in-domain datasets. In \Cref{tab:readout}, we report the readout model that has most often the highest probability $\argmax_k \frac{1}{T} \smash{\sum_{t=1}^T} p(\xi_t=k|\mathcal{D}_{<t})$. The result suggests that the most important factor for preferring deeper models is the dataset type - structured datasets need deeper models. The influence of the pre-training objective doesn't seem to be significant.

\ifdefined\twocolumns
\begin{figure}[ht]
    \subfloat[In-domain: ImageNet]{
    \centering
    \includegraphics[width=0.9\linewidth]{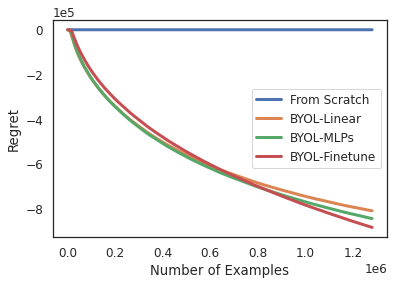}}
    \hfill
    \subfloat[Out-of-domain: PatchCamelyon]{
    \centering
    \includegraphics[width=0.9\linewidth]{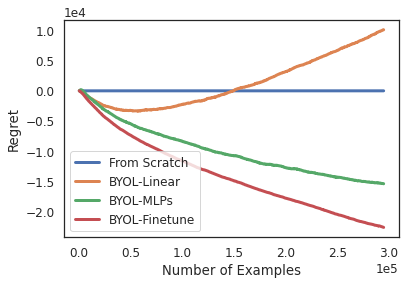}
    }
    \caption{Data efficiency: We plot the cumulative 
    next-step log-loss for a range of readouts with a ResNet-50 backbone as a function of (downstream) sample size  (in nats, lower is better).
    }
    \label{fig:data_efficiency}
\end{figure}
\else
\begin{figure}[ht]
    \centering
    \hspace*{-0.7cm}%
    \subfloat[In-domain: ImageNet]{
    \centering
    \includegraphics[width=0.48\linewidth]{imgs/data_efficiency_imagenet.png}
    \qquad\qquad}
    \hspace*{-0.7cm}%
    \subfloat[Out-of-domain: PatchCamelyon]{
    \centering
    \includegraphics[width=0.48\linewidth]{imgs/data_efficiency_patch.png}
    \qquad\qquad}%
    \hspace*{1.5cm}
    \caption{Data efficiency: We plot the cumulative 
    next-step log-loss for a range of readouts with a ResNet-50 backbone as a function of (downstream) sample size  (in nats, lower is better).
    }
    \label{fig:data_efficiency}
\end{figure}
\fi
\paragraph{Data Efficiency.} \label{sec:data_efficiency}
Data efficiency is an integral part of MDL. With the prequential perspective on MDL, we can visualize whether an algorithm is more data efficient by inspecting the predictive loss as a function of the data size. As a baseline we train a ResNet-50 model from scratch within our evaluation framework, and record the 
cumulative next-step losses as a reference baseline (comparator). Then, we evaluate BYOL pretrained ResNet-50 representations with 3 different readout models: 
1. linear probing; 2. deep MLP readout models; 3. linear readout but with fine-tuning the whole encoder. Because MDL already takes into account the model 
complexity, we can directly compare their MDL scores. We plot the ``regret'' (in this case it is more of an advantage) of different fine-tuning strategies relative to the training-from-scratch model as the baseline. \Cref{fig:data_efficiency} illustrates the ``regret'' along the data size for ImageNet, an in-domain dataset, and PatchCamelyon, an out-of-domain dataset for the BYOL representations. Learning ImageNet classification with a pre-trained representation is much more efficient than training from scratch. Shallow architectures, which are based on a frozen encoder, are sufficient. Only when the dataset is large enough (800k in case of ImageNet), we can finally see the benefit from fine-tuning the encoder. The picture is different for out-of-domain datasets. In general, pre-training still has benefits if deeper, more flexible readout models are used. The amount of capacity needed depends on the nature of the downstream dataset. In \Cref{fig:data_efficiency}, on PatchCamelyon, training from scratch surpasses linear probing after seeing only 150k examples. We show the regret plots for all 19 VTAB tasks in \Cref{app:data-efficiency}.

\section{Conclusions \& Future works}

We presented readout model switching, an approach derived from the MDL principle, to evaluate representations on given downstream tasks.
Our approach supports model switching to choose the most appropriate 
readout model for each representation, downstream task, and data size. 
Our metric takes model complexity and data efficiency into account, making it a unified measure for model comparison. We evaluated vision representations and compared our approach to accuracy-based methods. Using our evaluation framework, we were also able to provide insights into model scaling properties and preferred readout models, as well as compare data efficiency.

In theory, different permutations of the data sequence result in different codelengths. Although this does not appear to cause too much trouble in practice (see \Cref{app:mdl_score}), we reckon that this is an area for future work.
With the MDL formulation the performance metric under consideration is the log-loss (cross-entropy), which directly corresponds to compression. 
Extensions to other losses such as 0/1-loss etc.\ will need careful consideration.
We here report a single metric per downstream-task and representation. MDL supports non-stationary sequences and future work could
aim for reporting a single metric per sequence-of-tasks by concatenating them. However this poses questions about the order of downstream tasks, 
and how to deal with multiple output spaces $\calY$, and whether forward transfer between these tasks is permissible.

\clearpage
\bibliography{iclr2023_conference}
\bibliographystyle{iclr2023_conference}
\clearpage
\appendix
\section{Appendix}

\subsection{Proof of Regret Bound}\label{app:proof}
Assume that $x \sim P$ and a sequence of examples $x^N = (x_1, \dots, x_N)$ are i.i.d samples from $P$. 
Let $\hat{\theta}(x^N)$ be the parameters of a readout model $M$ that best fit the sequence $x^N$, i.e.\  $\hat{\theta}(x^N) = \inf_\theta M(x^N; \theta)$ and $\theta^*=\inf_\theta \expect[P^N]{M(x^N; \theta)}$. Since $x^N$ are i.i.d samples from $P$, $\hat{\theta}(x^N)$ is the ML estimator of $x^N$. We make the assumption that all readout models are from exponential family and $\expect[P]{T_k(X)}=\mu^*_k$ an element of the model $M_k$'s mean value parameter space, where $T_k(.)$ the sufficient statistics. 

Define the expected regret as the regret w.r.t the best sequence of readout head $\xi^*_{1:N}$ and their ML estimator $\hat{\theta}_{\xi^*_{1:N}}(x^N)$:
\begin{align*}
    R(n) = & \expect[P^N]{ \sum_{t=1}^N \log M_{\xi_t^*}(x_t; \hat{\theta}_{\xi_t^*}(x^N)) - \log \sum_{\xi_{1:N}} p(\xi_{1:N}) \prod_{t=1}^N M_{\xi_t}(x_t|\hat{\theta}_{\xi_t}(x^{t-1}))} \nonumber\\
     \leq & \expect[P^N]{ \sum_{t=1}^N \log M_{\xi_t^*}(x_t; \hat{\theta}_{\xi_t^*}(x^N)) - \log p(\xi^*_{1:N}) \prod_{t=1}^N M_{\xi^*_t}(x_t|\hat{\theta}_{\xi^*_t}(x^{t-1}))} \nonumber\\
     = & \expect[P^N]{ \sum_{t=1}^N \left( \log M_{\xi_t^*}(x_t; \hat{\theta}_{\xi_t^*}(x^N)) - \log M_{\xi^*_t}(x_t|\hat{\theta}_{\xi^*_t}(x^{t-1})) \right) - \log p(\xi^*_{1:N})} \nonumber\\
     = & - \expect[P^N]{ \log p(\xi^*_{1:N})} + \sum_{t=1}^N \expect[P^N]{ \log M_{\xi_t^*}(x_t; \hat{\theta}_{\xi_t^*}(x^N)) - \log M_{\xi^*_t}(x_t|\hat{\theta}_{\xi^*_t}(x^{t-1}))}
     \nonumber\\
     = & - \expect[P^N]{ \log p(\xi^*_{1:N})} + \sum_{t=1}^N \expect[P^N]{ \log M_{\xi_t^*}(x_t; \hat{\theta}_{\xi_t^*}(x^N)) - \log M_{\xi_t^*}(x_t; \theta^*_{\xi_t^*})} \\ 
     & + \sum_{t=1}^N \expect[P^N]{\log M_{\xi_t^*}(x_t; \theta^*_{\xi_t^*}) - \log M_{\xi^*_t}(x_t|\hat{\theta}_{\xi^*_t}(x^{t-1}))}     
\end{align*}
The first inequality comes from selecting one particular configuration of expert sequence.

Since we assume the readout models are from exponential family, they can be parameterized by the mean parameters. Recall that the exponential family is the probability distribution that can be written in the form of $P(x;\eta) = h(x)\exp{\left( \eta^\trp T(x) - A(\eta) \right)}$ where $\eta$ is the natural parameters, $T(x)$ is the sufficient statistics, $A(\eta)$ is the cumulant function. The mean parameters are $\mu := \mathbb{E}\left[ T(x) \right] = \partial A(\eta) / \partial \eta$ and have a one-to-one mapping to natural parameters $\eta$ (we use $\eta(\mu)$ to denote the mapping). Therefore, the exponential family can also be parameterized by the mean parameters $P(x;\mu) = h(x)\exp{\left( \eta(\mu)^\trp T(x) - A(\eta(\mu)) \right)}$. By using $T$ as a pushforward, we obtain that the sufficient statistics is also an exponential family in the form of $P(T;\mu) = h(T)\exp{\left( \eta(\mu)^\trp T - A(\eta(\mu)) \right)}$ and $\mathbb{E}_{x \sim P} \left[ \log M(x; \theta) \right] = \mathbb{E}_{T \sim P} \left[ \log M(T; \mu) \right]$. Here, we abuse the notation and use the same $P$ as the distribution induced on $T$ by the data distribution. 
\begin{align}
    R(n) \leq & - \expect[P^N]{ \log p(\xi^*_{1:N})} + \sum_{t=1}^N \expect[P^N]{ \log M_{\xi_t^*}(T_t; \hat{\mu}_{\xi_t^*}(T^N)) - \log M_{\xi_t^*}(T_t; \mu^*_{\xi_t^*})} \nonumber\\
    & + \sum_{t=1}^N \expect[P^N]{\log M_{\xi_t^*}(T_t; \mu^*_{\xi_t^*}) 
 - \log M_{\xi^*_t}(T_t; \hat{\mu}_{\xi_t^*}(T^{t-1})))}
 \label{app:eval:all_terms}
\end{align}
\begin{lemma}
    Given a sequence $T_1, \cdots, T_n$ and expert models $M_{1:K}$, denote the ML estimator of each expert $\hat{\mu}_k(T^n)$, $\mu^*_k=\inf_{\mu} \expect[P]{M_k(T; \mu)}$, the best fit experts $\xi^*_{1:n} = \inf_{\xi_{1:n}} \prod_t M_{\xi_t}(T_t;\hat{\mu}_{\xi_t}(T^{t-1}))$, the sequence $a_1, \cdots, a_n$ where 
    \begin{align*}
       a_n = \sum_{t=1}^n \expect[P^n]{ \log M_{\xi_t^*}(T_t; \hat{\mu}_{\xi_t^*}(T^n)) - \log M_{\xi_t^*}(T_t; \mu^*_{\xi_t^*})}
    \end{align*}
    is bounded and $\lim_{n \to \infty} a_n$ converges.
    \label{app:eval:lemma_const}
\end{lemma}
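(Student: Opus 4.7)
The plan is to decompose the sum according to the best-fit expert identity, apply the closed-form exponential-family likelihood-ratio, and Taylor-expand around $\mu^*_k$. First I would partition the time indices by writing $S_k = \{t : \xi^*_t = k\}$ and $n_k = |S_k|$, so that $a_n = \sum_{k=1}^K \mathbb{E}\bigl[\sum_{t \in S_k} \log M_k(T_t; \hat{\mu}_k(T^n)) - \log M_k(T_t; \mu^*_k)\bigr]$. Substituting the parameterization $\log M_k(T;\mu) = \log h(T) + \eta_k(\mu)^\top T - A_k(\eta_k(\mu))$, the $\log h$ terms cancel and each inner sum reduces to $n_k\bigl[(\eta_k(\hat{\mu}_k) - \eta_k(\mu^*_k))^\top \bar{T}_{S_k} - A_k(\eta_k(\hat{\mu}_k)) + A_k(\eta_k(\mu^*_k))\bigr]$, where $\bar{T}_{S_k}$ is the empirical mean of $T_k$ over $S_k$. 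Using that the full-sequence MLE satisfies $\hat{\mu}_k = (1/n)\sum_{t=1}^n T_k(x_t)$, the standard identity $(1/n)\sum_{t=1}^n U_t = \mathrm{KL}(P_{\hat{\mu}_k}\|P_{\mu^*_k})$ for $U_t := \log M_k(T_t;\hat{\mu}_k) - \log M_k(T_t;\mu^*_k)$ allows the convenient rewrite $\sum_{t\in S_k} U_t = n_k(\eta_k(\hat{\mu}_k)-\eta_k(\mu^*_k))^\top(\bar{T}_{S_k}-\hat{\mu}_k) + n_k\,\mathrm{KL}(P_{\hat{\mu}_k}\|P_{\mu^*_k})$, isolating a mean-zero linear piece and a nonnegative quadratic piece.

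Next I would Taylor-expand to second order around $\mu^*_k$. Both $\hat{\mu}_k - \mu^*_k$ and $\bar{T}_{S_k} - \mu^*_k$ are averages of i.i.d.\ sufficient statistics and hence of order $O_P(n^{-1/2})$ and $O_P(n_k^{-1/2})$ respectively, and $\eta_k(\hat{\mu}_k)-\eta_k(\mu^*_k) = J_k(\hat{\mu}_k-\mu^*_k) + O(\|\hat{\mu}_k-\mu^*_k\|^2)$ with $J_k = \mathrm{Cov}_{M_{\mu^*_k}}^{-1}[T_k]$ the Fisher information in the mean parameterization. Taking expectations under $P^n$ and exploiting that the i.i.d.\ structure yields $\mathrm{Cov}(\bar{T}_{S_k},\hat{\mu}_k) = (1/n)\,\mathrm{Cov}_P[T_k] = \mathrm{Cov}(\hat{\mu}_k,\hat{\mu}_k)$, the linear cross-term $\mathbb{E}[n_k(\hat{\mu}_k-\mu^*_k)^\top J_k(\bar{T}_{S_k}-\hat{\mu}_k)]$ vanishes to leading order, leaving $n_k\,\mathbb{E}[\mathrm{KL}(P_{\hat{\mu}_k}\|P_{\mu^*_k})] = (n_k/(2n))\,\mathrm{tr}(J_k\,\mathrm{Cov}_P[T_k]) + o(1)$ from the quadratic Taylor approximation of the KL. Summing over $k$ and using $\sum_k n_k = n$ yields $a_n \le (1/2)\max_k \mathrm{tr}(\mathrm{Cov}_{M_{\mu^*_k}}^{-1}[T_k]\,\mathrm{Cov}_P[T_k]) + o(1)$, which is the claimed uniform bound; convergence of $a_n$ then follows by dominated convergence once the limiting proportions $n_k/n$ are shown to stabilize.

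The main obstacle will be handling the joint dependence between the optimal assignment $\xi^*_{1:n}$ — which defines the random partition $\{S_k\}$ — and the data $T^n$, since $\bar{T}_{S_k}$ is not a plain i.i.d.\ average when the index set $S_k$ is itself data-dependent. The cleanest resolution I foresee is to condition on $\xi^*_{1:n}$ and verify that the per-expert second-order expansion yields a bound uniform over all admissible assignment patterns, exploiting that the inner sum depends on $\xi^*$ only through the counts $n_k$ and the partial means $\bar{T}_{S_k}$. An alternative is to argue that the best plug-in assignment stabilizes almost surely as $n\to\infty$, so the data-dependence of $S_k$ becomes asymptotically negligible; either route must additionally control the cubic Taylor remainder uniformly in the partitions, which I expect to be the most delicate step.
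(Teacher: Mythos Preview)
Your route is genuinely different from the paper's. The paper never partitions by expert; it telescopes in $n$, writes out $a_n-a_{n-1}$, and bounds each piece using the single-expert limit (their citation of Theorem~2 in Gr\"unwald--de~Rooij) together with the structural fact that $\xi^*_t$ is measurable with respect to $T^t$ and therefore independent of future samples. From $|a_n-a_{n-1}|<1.5\epsilon$ for all large $n$ they conclude $(a_n)$ is Cauchy. Your direct second-order expansion is more explicit about the limiting value, and the decomposition $\sum_{t\in S_k}U_t = n_k(\eta_k(\hat\mu_k)-\eta_k(\mu^*_k))^\top(\bar T_{S_k}-\hat\mu_k)+n_k\,\mathrm{KL}(P_{\hat\mu_k}\|P_{\mu^*_k})$ is a clean identity.

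The gap is exactly the one you flag, and neither of your proposed resolutions closes it. Conditioning on $\xi^*_{1:n}$ destroys the product-$P$ law of $(T_1,\dots,T_n)$ that you rely on: once you know $\xi^*_t=k$, the earlier $T_s$ with $s\le t$ are no longer distributed as $P$, so neither the covariance identity $\Cov(\bar T_{S_k},\hat\mu_k)=\Cov(\hat\mu_k,\hat\mu_k)$ nor the $O_P(n_k^{-1/2})$ rate for $\bar T_{S_k}-\mu^*_k$ survives. The ``uniform over partitions'' idea fails for the same reason: bounding $\mathbb{E}[\cdot]$ for every \emph{fixed} partition does not bound $\mathbb{E}[\cdot]$ for the random, data-dependent one, because $\mathbb{E}[\,\cdot\mid\xi^*=\pi]$ is not the same as the unconditional expectation evaluated at the pattern $\pi$. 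And stabilisation of $\xi^*_t$ would require a separation-of-experts assumption that is nowhere in the hypotheses. What actually controls the cross-term is the adaptedness $\xi^*_t\in\sigma(T^t)$: for $s>t$ the indicator $\mathbb{1}[\xi^*_t=k]T_t$ is independent of $T_s$, which is precisely the leverage the paper uses when it adds one data point and looks at $a_n-a_{n-1}$. If you want to salvage your partitioned approach, you need a martingale-type argument for $\sum_t\mathbb{1}[\xi^*_t=k](T_t-\mu^*_k)$ that exploits this filtration, not a conditioning or freezing argument.
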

\begin{proof}
    Because $\lim_{N= \to +\infty} \hat{\mu}(T^N) = \mu^*$, we have 
    \begin{align*}
        \lim_{N \to +\infty}\expect[P^N]{M(T^{N-1}, \hat{\mu}(T^N))} - \expect[P^{N-1}]{M(T^{N-1}, \hat{\mu}(T^{N-1}))} = 0
    \end{align*}
    Therefore, $\forall \epsilon$, $\exists m$, such that \begin{align*}
    | \expect[P^m]{M(T^{m-1}, \hat{\mu}(T^m))} - \expect[P^{m-1}]{M(T^{m-1}, \hat{\mu}(T^{m-1}))}| < 0.5 \epsilon.
    \end{align*}
    Using Theorem 2 in \citet{DBLP:journals/corr/abs-cs-0502004}, for all model $M \in \mathcal{M}$:
    \begin{align*}
        \lim_{N \to +\infty} \expect[P^N]{\log M(T^N; \mu^*)} - \expect[P^N]{ \log M(T^N; \hat{\mu}(T^N))}
        = \frac{1}{2} \tr{(\Cov_{M_{\mu^*}}^{-1}[T]\Cov_{P}[T])}
    \end{align*}
    Since the above sequence converges, it is also a Cauchy sequence. Therefore, $\forall \epsilon$, $\exists n > m$, such that 
    \begin{align*}
        & | \expect[P^n]{\log M(T^n; \mu^*)}  - \expect[P^n]{ \log M(T^n; \hat{\mu}(T^n))} \\ 
        & - \expect[P^{n-1}]{\log M(T^{n-1}; \mu^*)} + \expect[P^{n-1}]{ \log M(T^{n-1}; \hat{\mu}(T^{n-1}))} | < 0.5 \epsilon \\ 
        \implies & | \expect[P]{\log M(T_n; \mu^*)}  - \expect[P^n]{ \log M(T_n; \hat{\mu}(T^n))} \\ 
        & - \expect[P^{n}]{ \log M(T^{n-1}; \hat{\mu}(T^{n}))} + \expect[P^{n-1}]{ \log M(T^{n-1}; \hat{\mu}(T^{n-1}))} | < 0.5 \epsilon\\
        \implies & | \expect[P]{\log M(T_n; \mu^*)}  - \expect[P^n]{ \log M(T_n; \hat{\mu}(T^n))} |\\ 
        & - |\expect[P^n]{ \log M(T^{n-1}; \hat{\mu}(T^{n}))} - \expect[P^{n-1}]{ \log M(T^{n-1}; \hat{\mu}(T^{n-1}))} | < 0.5 \epsilon    \\
        \implies & | \expect[P^n]{ \log M(T_n; \hat{\mu}(T^n)) - \log M(T_n; \mu^*)} | < \epsilon
    \end{align*}
    Denote $n_k$ for each expert, let $n=\max\{n_1, \cdots, n_K\}$, the above bound is simultaneous true for all the expert models.

    Since $\xi^*_{1:n} = \inf_{\xi_{1:n}} \prod_{t=1}^n M_{\xi_t}(T_t;\hat{\mu}_{\xi_t}(x^{t-1}))$,  each $\xi_t^*$ only rely on data up to $t$ and not any data point comes after. 
    \begin{align*}
        a_n - a_{n-1 }
        = & \expect[P^{n}]{\sum_{t=1}^n \log M_{\xi_t^*}(T_t; \hat{\mu}_{\xi_t^*}(T^n)) - \log M_{\xi_t^*}(T_t; \mu^*_{\xi_t^*})} \\
        & - \expect[P^{n-1}]{\sum_{t=1}^{n-1}\log M_{\xi_t^*}(T_t; \hat{\mu}_{\xi_t^*}(T^{n-1})) - \log M_{\xi_t^*}(T_t; \mu^*_{\xi_t^*}) }\\
        = & \expect[P^{n}]{\log M_{\xi_n^*}(T_n; \hat{\mu}_{\xi_n^*}(T^n)) - \log M_{\xi_n^*}(T_n; \mu^*_{\xi_n^*})} \\
        & + \sum_{t=1}^{n-1} \expect[P^{n-1}]{\expect[P]{\log M_{\xi_t^*}(T_t; \hat{\mu}_{\xi_t^*}(T^n))} - \log M_{\xi_t^*}(T_t; \hat{\mu}_{\xi_t^*}(T^{n-1})) } \\
        = & \expect[P^{n}]{\log M_{\xi_n^*}(T_n; \hat{\mu}_{\xi_n^*}(T^n)) - \log M_{\xi_n^*}(T_n; \mu^*_{\xi_n^*})} \\
        & + \expect[P^{n}]{\log M_{\xi_t^*}(T^{n-1}; \hat{\mu}_{\xi_t^*}(T^n))} - \expect[P^{n-1}]{\log M_{\xi_t^*}(T^{n-1}; \hat{\mu}_{\xi_t^*}(T^{n-1}))} \\
    \implies & |a_n - a_{n-1}| < 1.5 \epsilon
    \end{align*}
    Therefore, sequence $a_n$ is a Cauchy sequence. Cauchy sequences are bounded, thus $a_n$ is bounded. Since $a_n$ takes value in real, $\lim_{n \to +\infty} a_n$ converges.
\end{proof}
Based on \Cref{app:eval:lemma_const}, we conclude that the 2nd term in \Cref{app:eval:all_terms} is a constant. Therefore, it does not have major influence to the regret bound, i.e. its contribution to the regret is $O(1)$.

\begin{lemma}
Given a sequence $T_1, \cdots, T_N$ and expert models $M_{1:K}$, denote the ML estimator of each expert $\hat{\mu}_k(T^N)$, $\mu^*_k=\inf_{\mu} \expect[P]{M_k(T; \mu)}$, the best fit experts $\xi^*_{1:n} = \inf_{\xi_{1:n}} \prod_t M_{\xi_t}(T_t;\hat{\mu}_{\xi_t}(T^{t-1}))$, 
    \begin{align*}
     \sum_{t=1}^N &\expect[P^N]{\log M_{\xi_t^*}(T_t; \mu^*_{\xi_t^*}) 
 - \log M_{\xi^*_t}(T_t; \hat{\mu}_{\xi_t^*}(T^{t-1})} \\
 & \leq \frac{1}{2} \max_{k} \left\{Tr\left(\Cov_{M_{\mu_{k}^*}}^{-1}[T_k(x)] \Cov_P\left[T_k(x)\right]\right) \right\} \log N 
    \end{align*}
    \label{app:eval:lemma_term2}
\end{lemma}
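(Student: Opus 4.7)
My plan is to reduce each summand to a single-expert prequential regret, invoke the same exponential-family regret asymptotic used in Lemma above (Theorem~2 of Gr\"unwald), and combine via a worst-case-over-experts inequality together with the harmonic sum.

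First, I would exploit the optimality of $\xi^*_{1:N}$. Because it minimizes $\prod_t M_{\xi_t}(T_t;\hat{\mu}_{\xi_t}(T^{t-1}))$ and the product factorizes across $t$, the greedy identity $\log M_{\xi_t^*}(T_t;\hat{\mu}_{\xi_t^*}(T^{t-1})) = \max_k \log M_k(T_t;\hat{\mu}_k(T^{t-1}))$ holds at every step. Combined with the pointwise bound $\log M_{\xi_t^*}(T_t;\mu^*_{\xi_t^*}) \leq \max_k \log M_k(T_t;\mu_k^*)$ and the elementary inequality $\max_k f_k - \max_k g_k \leq \max_k (f_k - g_k)$, this yields the pointwise estimate
\[
\Delta_{\xi_t^*,t} \;\leq\; \max_k \Delta_{k,t}, \qquad \Delta_{k,t} := \log M_k(T_t;\mu_k^*) - \log M_k(T_t;\hat{\mu}_k(T^{t-1})).
\]

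Second, for each fixed $k$, I apply the exponential-family prequential ML-regret asymptotic
\[
\expect[P^t]{\Delta_{k,t}} = \tfrac{1}{2(t-1)}\,\tr\!\bigl(\Cov_{M_{\mu_k^*}}^{-1}[T_k]\,\Cov_P[T_k]\bigr) + o(1/t),
\]
whose derivation is a standard second-order Taylor expansion of $\log M_k$ around $\mu_k^*$: the first-order score term vanishes in $P$-expectation by the definition of $\mu_k^*$, the Hessian equals $-\Cov_{M_{\mu_k^*}}[T_k]$ by the exponential-family identity for the Fisher information in the mean parameter, and $\Cov[\hat{\mu}_k(T^{t-1})-\mu_k^*] = \Cov_P[T_k]/(t-1)$ by the CLT of the ML estimator. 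Summing $\sum_{t=2}^N 1/(t-1) = \log N + O(1)$ and taking the max over $k$ of the per-expert constant then delivers the claimed bound, with the remainder absorbed into the $O(1)$ term of the outer Theorem~1.

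The main obstacle is the step from $\expect{\max_k \Delta_{k,t}}$ to $\max_k \expect{\Delta_{k,t}}$, since the naive Jensen-type inequality goes in the wrong direction. I would close this gap by exploiting the $O(1/t^2)$ variance of $\Delta_{k,t}$ in exponential families: each $\Delta_{k,t}$ concentrates tightly around its $O(1/t)$ mean, so $\expect{\max_k \Delta_{k,t}} - \max_k \expect{\Delta_{k,t}}$ is of strictly lower order than $\max_k \expect{\Delta_{k,t}}$, and its cumulative contribution over $t$ is absorbed into the $O(1)$ remainder allowed by Theorem~1. A simpler but weaker alternative that avoids the concentration argument is to bound $\max_k \Delta_{k,t} \leq \sum_k \Delta_{k,t}^+$ and pay a factor of $K$ in the leading constant.
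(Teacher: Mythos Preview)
Your proposal parallels the paper's proof in its core mechanics---second-order expansion around $\mu_k^*$, Fisher information $\Cov_{M_{\mu_k^*}}^{-1}[T_k]$ as Hessian, $\Cov_P[T_k]/(t-1)$ from the sample-mean ML estimator, harmonic sum, worst case over $k$---but the two diverge in how the random index $\xi_t^*$ is handled. The paper first converts the expected log-ratio into a KL via the moment-matching identity $\expect[T\sim P]{\log M_k(T;\mu_k^*)-\log M_k(T;\mu)}=\mathrm{KL}\bigl[M_k(\cdot;\mu_k^*)\,\|\,M_k(\cdot;\mu)\bigr]$, Taylor-expands the KL to the quadratic form $\tfrac12(\hat\mu_k-\mu_k^*)^\top\Cov_{M_{\mu_k^*}}^{-1}[T_k](\hat\mu_k-\mu_k^*)$, takes the $P^{t-1}$-expectation to obtain $\tfrac{1}{2(t-1)}\tr(\cdot)$, and then simply substitutes $k=\xi_t^*$ into that formula before bounding by $\max_k$. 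That substitution is precisely the step you flag as unjustified, and the paper does not address it further.

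Your concentration fix, however, does not close the gap as stated: the variance of $\Delta_{k,t}$ is $O(1/t)$, not $O(1/t^2)$. In an exponential family $\Delta_{k,t}=(\eta_k(\mu_k^*)-\eta_k(\hat\mu_k))^\top T_t-\text{const}$ is \emph{linear} in $T_t$ with slope $\|\eta_k(\mu_k^*)-\eta_k(\hat\mu_k)\|=O_P(t^{-1/2})$, so the conditional variance over $T_t$ is $O_P(1/t)$ and dominates. Hence $\expect{\max_k\Delta_{k,t}}$ can be of order $t^{-1/2}$, and your pointwise route $\Delta_{\xi_t^*,t}\le\max_k\Delta_{k,t}$ (which, incidentally, is immediate from $\xi_t^*\in\{1,\dots,K\}$ without the $\max f-\max g$ detour) does not recover a $\log N$ cumulative bound; the same holds for your $\sum_k(\cdot)^+$ fallback applied to the per-sample $\Delta$'s. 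To retain $\log N$ with a factor-$K$ penalty one would need the nonnegativity of the \emph{post-$T_t$-integration} KL terms, but integrating out $T_t$ with $\xi_t^*$ held fixed is exactly the step in question. In short, you have correctly located a subtlety that the paper's own argument also leaves open.
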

\begin{proof}
    Using Proposition 10 in \citet{DBLP:journals/corr/abs-cs-0502004}, for fixed parameter $\mu$, 
    \begin{align*}
        \expect[T \sim P]{ \log M(T; \mu^*) - \log M(T; \mu)} = KL\left[ M(T; \mu^*) || M(T; \mu) \right] 
    \end{align*}
    Therefore, we re-write the expression
    \begin{align}
        & \sum_{t=1}^N \expect[P^N]{\log M_{\xi_t^*}(T_t; \mu^*_{\xi_t^*}) 
        - \log M_{\xi^*_t}(T_t; \hat{\mu}_{\xi_t^*}(T^{t-1})} \nonumber\\
        = & \sum_{t=1}^N  \expect[\hat{\mu}_{\xi_t^*} \sim P^{t-1}]{ KL\left[ M(T; \mu^*_{\xi_t^*}) || M(T; \hat{\mu}_{\xi_t^*}) \right]}
        \label{eq:eval:regret}
    \end{align}
    With a slight abuse of notation, we use the distribution $P^{t-1}$ to represent the estimated mean parameters with $t-1$ samples.
    
    Let $X^N=(X_1, X_2, \dots, X_N)$ be a collection of independent random variables sampled from the same distribution $P$. The probability of a sample from the joint distribution is also an exponential family following the form
    \begin{align*}
        \log P(x^N;\eta) = \sum_t \log P(x_t; \eta) = \sum_t h(x_t) + \eta^\trp \sum_t T(x_t) - NA(\eta) 
    \end{align*}
    The maximum likelihood estimator of $\eta$ can be found by taking the derivative of the above expression:
    \begin{align*}
        \sum_t T(x_t) - N \frac{\partial A(\eta)}{\partial \eta} = 0 \implies \hat{\mu} = \frac{1}{N} \sum_t T(x_t) 
    \end{align*}
     
    Therefore, the maximum likelihood estimator of the mean parameters is $\hat{\mu}(x^N) = \frac{1}{N} \sum_{t=1}^N T(x_t)$, i.e. $\hat{\mu}(T^N)=\frac{1}{N}\sum_t T_t$. 
    
    Given any model $M$ and estimator $\hat{\mu}$, we can Taylor expand the likelihood of the model at $\mu = \expect[P]{T}$:
    \begin{align*}
        \log M(T; \hat{\mu}) = & \log M(T; \mu) + (\hat{\mu} - \mu)^\trp \nabla_{\hat{\mu}} \log M (T, \hat{\mu})|_\mu \\
        & + \frac{1}{2} (\hat{\mu} - \mu)^\trp \nabla^2_{\hat{\mu}} \log M (T, \hat{\mu})|_\mu (\hat{\mu} - \mu) + O(1) 
    \end{align*}
    The KL between $M(T;\mu)$ and $M(T; \hat{\mu})$ is
    \begin{align*}
        KL\left[ M(T; \mu) || M(T; \hat{\mu}) \right]
        = & \expect[M(T; \mu)]{ \log M(T; \mu) - \log M(T; \hat{\mu}) } \\
        = & \frac{1}{2} (\hat{\mu} - \mu)^\trp  \mathbb{E}_{M(T; \mu)} \left[ \nabla^2_{\hat{\mu}} \log M (T, \hat{\mu})|\mu \right] (\hat{\mu} - \mu) \\ 
        = & \frac{1}{2} (\hat{\mu} - \mu)^\trp  \mathcal{I}(\mu) (\hat{\mu} - \mu) \\
        = & \frac{1}{2} (\hat{\mu} - \mu)^\trp  \Cov_{M_{\mu}}^{-1}[T] (\hat{\mu} - \mu)
    \end{align*}
    where $\mathcal{I}(\mu)= \mathbb{E}_{P} \left[ \nabla^2_{\hat{\mu}} \log M (x, \hat{\mu})|\mu \right]$ is the Fisher information. In the derivation, we use the fact that for the ML estimator $\hat{\mu}$, $ \mathbb{E}_P \left[\nabla_{\hat{\mu}}\log M (x, \hat{\mu})\right]|_\mu= \nabla_{\hat{\mu}} \mathbb{E}_P[\log M (x, \hat{\mu})]|_\mu = 0$. Fisher information has the following relationship $\mathcal{I}(\mu)=\Cov_{T \sim M_{\mu}}^{-1}[T]$ which we denote $\Cov_{M_{\mu}}^{-1}[T]$, RHS is the covariance matrix under the distribution $M_\mu$.
    
    For readout model $M$, after seeing $t-1$ examples, we get the ML estimator $\hat{\mu}_{t-1}$ and
    \begin{align*}
        & \expect[ \hat{\mu} \sim P^{t-1}] { KL\left[ M(T; \mu^*) || M(T; \hat{\mu}) \right] } \\    
        = & \expect[ \hat{\mu} \sim P^{t-1}]{\frac{1}{2} (\hat{\mu} - \mu^*)^\trp  \Cov_{M_{\mu^*}}^{-1}[T] (\hat{\mu} - \mu^*)}\\
        = & \frac{1}{2} \expect[ \hat{\mu} \sim P^{t-1}]{Tr\left(\Cov_{M_{\mu^*}}^{-1}[T] (\hat{\mu} - \mu^*)  (\hat{\mu} - \mu^*)^\trp \right)}\\
        = & \frac{1}{2} Tr\left(\Cov_{M_{\mu^*}}^{-1}[T] \expect[\hat{\mu} \sim P^{t-1}]{(\hat{\mu} - \mu^*)  (\hat{\mu}_{t-1} - \mu^*)^\trp }\right) \\
        = & \frac{1}{2} \frac{1}{t-1} Tr\left(\Cov_{M_{\mu}}^{-1}[T] \Cov_P\left[T\right]\right)
    \end{align*}
    First, using the fact that trace is a linear operator, we could exchange the order with expectation. The last step comes from the fact that $\hat{\mu} = \frac{1}{t-1}\sum_{i=1}^{t-1} T_i$ is an unbiased estimate of $\mu^*$. The variance shrinks at the rate of $t-1$. 
    \begin{align*}
    & \expect[\hat{\mu} \sim P^{t-1}]{\left(\hat{\mu} - \mu \right)  \left(\hat{\mu} - \mu \right)^\trp } \\
    = & \expect[P^{t-1}]{\left(\frac{1}{t-1}\sum_{i=1}^{t-1} T_i - \mu \right)  \left(\frac{1}{t-1}\sum_{i=1}^{t-1} T_i - \mu \right)^\trp }\\
    = & \frac{1}{t-1}\mathbb{E}_P[TT^\trp] - \frac{1}{t-1}\mu \mu^\trp=\frac{1}{t-1}\Cov_P[T]
    \end{align*}
    $\Cov_P[T]$ is the true covariance of $T$.    
    \begin{align*}
        & \sum_{t=1}^N  \expect[\hat{\mu}_{\xi_t^*} \sim P]{ KL\left[ M(T; \mu^*_{\xi_t^*}) || M(T; \hat{\mu}_{\xi_t^*}) \right]}\\
        \leq & \sum_{t=2}^N \frac{1}{2(t-1)} Tr\left(\Cov_{M_{\mu_{\xi_t^*}}}^{-1}[T_{\xi_t^*}] \Cov_P\left[T_{\xi_t^*}\right]\right)\\
        \leq & \frac{1}{2} \sum_{t=2}^N \frac{1}{t-1} \max_{k} \left\{Tr\left(\Cov_{M_{\mu_{k}^*}}^{-1}[T_{\xi_t^*}] \Cov_P\left[T_{\xi_t^*}\right]\right) \right\} \\
        \leq & \frac{1}{2} \max_{k} \left\{Tr\left(\Cov_{M_{\mu_{k}^*}}^{-1}[T_k] \Cov_P\left[T_k\right]\right) \right\} \left(\sum_{t=2}^N \frac{1}{t-1} \right) \\
        \leq &  \frac{1}{2} \max_{k} \left\{Tr\left(\Cov_{M_{\mu_{k}^*}}^{-1}[T_k(x)] \Cov_P\left[T_k(x)\right]\right) \right\} \log N 
    \end{align*}
    Inequality in line 2 is because at $t=1$, we use the prior and incur a constant regret. Inequality in line 5 uses harmonic series $\sum_{i=1}^{N} \frac{1}{i} = \log N + \text{const}$. 
\end{proof}

\begin{proof}[Proof of \Cref{theorm:1}]
    Combining \Cref{app:eval:lemma_const} and \Cref{app:eval:lemma_term2} and substitute the terms in \Cref{app:eval:all_terms}, the expected regret is then
    \begin{align}
        R(n) \leq & - \expect[P]{ \log p(\xi^*_{1:N}) } + \frac{1}{2} \max_{k} \left\{Tr\left(\Cov_{M_{\mu_{k}^*}}^{-1}[T_k(x)] \Cov_P\left[T_k(x)\right]\right) \right\} \log N + O(1) 
    \end{align}
\end{proof}
The expected regret for our readout model switching is composed of 2 terms: one is the regret from expert switching algorithm and the other is the regret of the readout models. The growth of second term is well behaved and bounded by $C\log N$ where $C$ is a constant that relates to the model family chosen. The first term is not necessarily bounded by $\log N$, but relates to the switching strategy chosen. However, for many switching algorithms, such as Bayesian mixture, fixed share with decreasing switching rate and run-length code, $- \expect[P]{ \log p(\xi^*_{1:N}) }$ is in the order of $\log N$. In particular, for fixed share with decreasing switching rate, we have the result of $ - \expect[P]{ \log p(\xi^*_{1:N}) } \leq m \log K + (m - 1) \log \frac{N}{m - 1}$ where $m - 1$ is the number of switch points (i.e.\ $m$ is the number of blocks in which the same expert is used). See \citet{DBLP:journals/corr/KoolenR13} for proof of the fixed share and more details of the regret bound of other switching strategy. A similar proof is given by \citet{Hutter:12ctswitch} for the particular case of $m = 2$.

\subsection{Distribution Shift between Train and Test in VTAB} \label{app:shift}
\Cref{fig:dataset_shift} shows the top-1 accuracy on train and test subsets with the original VTAB and reshuffled VTAB datasets. For the same dataset, a difference in test set performance between the original and reshuffled datasets suggests data shift between the two versions. KittiDistance have a significant data shift. DMLab, PatchCamelyon and SmallNorbAzimuth also have distribution shift but to a much less degree compared to KittiDistance. To make sure that the dataset is i.i.d, we combine all its subsets, randomly shuffle then partition into the same number of train/test examples. We only take the training subset for online learning, which results in sequence lengths showed in \Cref{tab:number_examples_vtab}. To ensure the distribution shift doesn't affect the conclusion, our experiments in the main paper are run with the reshuffled version of the VTAB dataset. However, we provide additional results on the original VTAB datasets in \Cref{app:mdl_score}.

\begin{figure}[hbt!]
    \begin{center}
    \includegraphics[width=\textwidth]{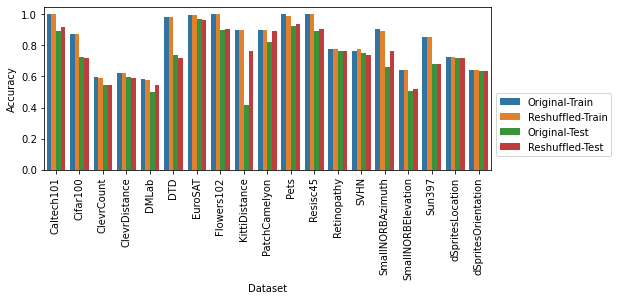}
    \end{center}
    \caption{Comparison of train and test accuracies on the original and reshuffled VTAB datasets.}
    \label{fig:dataset_shift}
\end{figure}

\begin{table}[hbt!]
    \centering
    \small
    \begin{tabular}{llc}\toprule
        Group & Dataset & Number of Examples\\
        \midrule
        \multirow{7}{*}{Natural} & Caltech101 & 3060 \\
        & Cifar100 & 50000\\
        & DTD & 3760 \\
        & Flowers102 & 2040\\
        & Pets & 3680\\
        & SVHN & 73257\\
        & Sun397 & 87003\\
        \midrule
        \multirow{4}{*}{Special} & EuroSAT & 21600\\
        & PatchCamelyon & 294912 \\
        & Resisc45 & 25200\\
        & Retinopathy & 46032 \\
        \midrule
        \multirow{8}{*}{Structured} & ClevrCount & 70000\\
        & ClevrDistance & 70000 \\
        & DMLab & 88178\\
        & DSpritesLocation & 663552\\
        & DSpritesOrientation & 663552\\
        & KittiDistance & 6770\\
        & SmallNORBAzimuth & 36450\\
        & SmallNORBElevation & 36450\\
        \bottomrule
    \end{tabular}
    \caption{Number of examples in the reshuffled VTAB datasets.}
    \label{tab:number_examples_vtab}
\end{table}

\subsection{2-Stage Online and Offline Implementation} \label{app:2_stage}
The 2-stage implementation is straight forward: first stage, we compute the cross entropy losses for all models at each timestep and store the results; second stage, we use model switching to compute the final codelength (see \Cref{alg:2_stages}).
\begin{algorithm}[h]
\caption{MDL with Readout Model Switching (2-Stage)}
Stage 1:
\begin{algorithmic}[1]
  \REQUIRE data $\mathcal{D}=(x_t, y_t)_{t=1}^T$; $K$ readout models; subroutine that updates the parameters given a dataset \emph{UpdateParameters}
  \STATE Initialize: model parameters $\theta_1, \dots, \theta_K$; an empty list $L$ to store loss per step and per model
  \FOR{$t = 1$ \TO $T$}
    \STATE Compute next-step loss of $K$ readout models: $\mathcal{L}_t^k := -\log p_{k}(y_t|x_t) = -\log p(y_t| x_t; \theta_k)$
    \STATE Store $\mathcal{L}_t^1$, \dots, $\mathcal{L}_t^K$ to $L$
    \FOR{$k = 1$ \TO $K$} 
        \STATE Update parameters: $\theta_k \leftarrow$ \emph{UpdateParameters}($\theta_k$, $\mathcal{D}_{\leq t}$)
    \ENDFOR
  \ENDFOR
  
  \RETURN $L$
\end{algorithmic}
Stage 2:
\begin{algorithmic}[1]
  \REQUIRE Cross entropy results from stage 1 $L$; initial model probability $p(\xi_1)$ and switching strategy $p(\xi_t|\xi_{t-1})$
  \STATE Initialize: $s=\log p(\xi_1)$; an empty list $Q$ to store posterior $p(\xi_t|\calD_{<t})$
  \FOR{$t = 1$ \TO $T$}
    \STATE Compute $\log p(\xi_t, y^{t-1}|x^{t-1})$: $s \leftarrow \log \sum_{\xi_{t-1}} \exp \left(\log p(\xi_{t}|\xi_{t-1}) + s \right)$
    \STATE Compute posterior $p(\xi_t|\calD_<t)$ and store in $Q$: \\
    $\log p(\xi_t|\calD_<t)=\log p(\xi_t, y^{t-1}|x^{t-1}) - \log \sum_{\xi_t} p(\xi_t, y^{t-1}|x^{t-1})$
    \STATE Get $\mathcal{L}_t^1$, \dots, $\mathcal{L}_t^K$ from $L$ and combine $K$ models to update $\log p(\xi_t, y^t| x^t)$: $s \leftarrow -\mathcal{L}_t^{\xi_t} + s$
  \ENDFOR
  
  \STATE Compute total codelength $\mathcal{L}_{switch} \leftarrow - \log \sum_{\xi_T} \exp(s)$
  \RETURN $\mathcal{L}_{switch}$ and $Q$
\end{algorithmic}
\label{alg:2_stages}
\end{algorithm}

\newpage
\subsection{Accuracy-based Evaluations} \label{app:accuracy_vs_ranking}
 For both linear and a combination of MLP evaluations, we perform a hyperparameter search on learning rate, weight decay and EMA with a batch size 1024 (see \Cref{tab:hyper_vtab_erm} for details). We split the dataset into training and validation where the validation set contains 10\% of the data. We choose the best performing hyperparameter on validation set and report top-1 accuracy of the test set in \Cref{tab:pretrain_obj} and \Cref{tab:scaling}. We treat the readout models type as one of the hyperparameters, choose the one with the lowest loss on validation set and report the test set accuracy.
\begin{table}[h!]
\begin{center}
\begin{tabular}{ll}
    \toprule
    Parameter & Values \\
    \midrule
    Batch size & 1024 \\
    Learning rate & \{ 1e-4, 3e-4, 1e-3, 3e-3 \} \\
    Weight decay & \{ 0.0, 1e-6, 1e-4, 1e-2 \} \\
    EMA step size (Polyak averaging) & \{ 1e-4, 1e-2, 1.0 \} \\ 
\end{tabular}
\end{center}
\caption{Hyperparameters for accuracy-based evaluations on VTAB.}
\label{tab:hyper_vtab_erm}
\end{table}
\paragraph{Average accuracy v.s. Ranking on multiple datasets.}

\begin{table}[hbt!]
    \centering
    \begin{tabular}{llcccc}\toprule
    
    Architecture & Obj. & Acc (\%) & \# &  Rank & \# \\
    \midrule
    ViT/B16	&	DINO	&	76.01	&	2	&	2.26	&	1	\\
    ResNet-50	&	BYOL	&	77.93	&	1	&	2.37	&	2	\\
    ResNet-50	&	SUP	&	75.51	&	3	&	3.82	&	3	\\
    ViT/B16	&	MAE	&	74.10	&	5	&	3.84	&	4	\\
    ViT/B16	&	SUP	&	72.46	&	6	&	4.18	&	5	\\
    ResNet-50	&	SimCLR	&	74.35	&	4	&	4.53	&	6	\\
    \bottomrule
    \end{tabular}
    \caption{Comparison of average accuracy v.s. average ranking for summarizing performance on multiple datasets (19 datasets from VTAB) using linear evaluation protocol.}
    \label{tab:avg_acc_rank}
\end{table}

\Cref{tab:avg_acc_rank} shows the difference on model selection when using average accuracy and average ranking for aggregating performances on multiple datasets. We see that the two summarization methods don’t have the same results for model selection. The reason why it's problematic to average accuracy over multiple datasets is discussed in \cite{demvsar2006statistical}. In brief, averaging over accuracy is only meaningful when the results on different datasets are comparable and averages are susceptible to outliers. Since the VTAB benchmark consists of 19 datasets for classification tasks of very different nature, averaging over the accuracy is an ill-conceived practice and we choose to use average ranking for presenting our results.
\subsection{Implementation Details}\label{app:hyperparams}

\subsubsection{VTAB}

\paragraph{Frozen Encoders}
For frozen encoders, we use the readout models: linear layer and MLP of 1-7 hidden layers. We use AdamW as the optimizer and hyperparameters listed in \Cref{tab:hyper_vtab_frozen}. The switching probability for fix share if $\alpha_t = 10 / t$. Images are center cropped and resized to $224 \times 224$.
\begin{table}[h!]
\begin{center}
\begin{tabular}{ll}
    \toprule
    Parameter & Values \\
    \midrule
    Batch size & 32 \\
    Number of replay-streams &  \{ 3, 10, 30, 100 \} \\
    Learning rate & \{ 1e-4, 3e-4, 1e-3, 3e-3 \} \\
    AdamW $\beta_1$ & \{ 0.5, 0.7, 0.9 \} \\
    Weight decay & \{ 0.0, 1e-6, 1e-4, 1e-2 \} \\
    EMA step size (Polyak averaging) & \{ 1e-4, 1e-2, 1.0 \} \\ 
\end{tabular}
\end{center}
\caption{Hyperparameters for computing MDL of frozen encoders on VTAB.}
\label{tab:hyper_vtab_frozen}
\end{table}

\paragraph{Train from Scratch}
This section correspond to the training from scratch experiment on VTAB presented in \Cref{sec:data_efficiency}. We use SGD with nesterov momentum for training from scratch. We take the center crop and resize the image to $224 \times 224$ for preprocessing. No readout models is needed as the encoder is trained on the relevant task. For each VTAB task, we uniformly randomly select 300 configurations from the hyperparameters listed in \Cref{tab:hyper_vtab_scratch}.
  
\begin{table}[h!]
\begin{center}
\begin{tabular}{ll}
    \toprule
    Parameter & Values \\
    \midrule
    Batch size & 32 \\
    Number of replay-streams &  \{ 50, 100, 150, 200 \} \\
    Learning rate & \{ 0.001, 0.003, 0.01, 0.03, 0.1, 0.3, 1.0 \} \\
    momentum & 0.9 \\
    Weight decay & \{ 1.e-5, 3e-5, 1e-4, 3e-4, 1e-3, 3e-3, 1e-2, 3e-2 \} \\
    EMA step size (Polyak averaging) & \{ 1e-3, 1e-2, 1e-1, 1.0 \} \\ 
\end{tabular}
\end{center}
\caption{Hyperparameters for computing MDL with training ResNet-50 from scratch on VTAB.}
\label{tab:hyper_vtab_scratch}
\end{table}

\paragraph{Fine-tuning}
This section correspond to the fine-tuning experiment on VTAB presented in \Cref{sec:data_efficiency}. We use ResNet-50 encoder for the fine-tuning experiment. Readout model is a linear layer. The encoder parameters are initialized with the pre-trained parameters of BYOL, but updated on the downstream task. We use SGD with nesterov momentum for fine-tuning pre-trained encoders. Images are center cropped and resized to $224 \times 224$ for preprocessing. For each VTAB task, we uniform randomly select 300 configurations from the hyperparameters listed in \Cref{tab:hyper_vtab_finetune}.

\begin{table}[h!]
\begin{center}
\begin{tabular}{ll}
    \toprule
    Parameter & Values \\
    \midrule
    Batch size & 32 \\
    Number of replay-streams &  \{ 10, 30, 50 \} \\
    Learning rate & \{ 1e-4, 3e-4, 1e-3, 3e-3, 0.01, 0.03, 0.1, 0.3, 1. \} \\
    momentum & 0.9 \\
    Weight decay & \{ 0.0, 1.e-5, 3e-5, 1e-4, 3e-4, 1e-3, 3e-3 \} \\
    EMA step size (Polyak averaging) & \{ 1e-3, 1e-2, 1e-1, 1.0 \} \\ 
\end{tabular}
\end{center}
\caption{Hyperparameters for computing MDL with fine-tuning pre-trained encoders on VTAB.}
\label{tab:hyper_vtab_finetune}
\end{table}

\subsubsection{ImageNet}
\paragraph{Frozen Encoders}
Images are preprocessed with spatial augmentation during training: randomly cropped, randomly flipped and resized to $224 \times 224$. We use the readout models: linear layer and MLP of 1-3 hidden layers. We use AdamW as the optimizer and hyperparameters listed in \Cref{tab:hyper_imagenet_frozen}. The switching probability for fix share if $\alpha_t = 1 / t$.

\begin{table}[h!]
\begin{center}
\begin{tabular}{ll}
    \toprule
    Parameter & Values \\
    \midrule
    Batch size & 512 \\
    Number of replay-streams &  \{ 10, 30, 50 \} \\
    Learning rate & \{ 3e-5, 1e-4, 3e-4, 1e-3 \} \\
    AdamW $\beta_1$ & \{ 0.5, 0.7 \} \\
    Weight decay & \{ 1e-2, 1. \} \\
    EMA step size (Polyak averaging) & \{ 0.01, 1. \} \\ 
\end{tabular}
\end{center}
\caption{Hyperparameters for computing MDL of frozen encoders on ImageNet.}
\label{tab:hyper_imagenet_frozen}
\end{table}

\paragraph{Fine-tuning}
This section correspond to the fine-tuning experiment on ImageNet presented in \Cref{sec:data_efficiency}. We use a ResNet-50 encoder for the fine-tuning experiment. Readout model is a linear layer. The encoder parameters are initialized with the pre-trained parameters from BYOL, but updated on the downstream task. We use AdamW and spatial augmentation for preprocessing. Images are randomly cropped and resized to $224 \times 224$. We use the same set of hyperparameters as the frozen encoder experiments listed in \Cref{tab:hyper_imagenet_frozen}.

\paragraph{Train from Scratch}

We train a standard ResNet-50 architecture from scratch on ImageNet and follow the experimental\ setting described in \citep{bornschein2022mdl}. In particular we use replay-streams training with 
uniform replay, AdamW, use randaugment for data augmentation, and forward calibration. We run a random hyperparameter search by sampling 50 hyperparameter configurations from the 
intervals listed in \cref{tab:hyper_imagenet_fromcratch}. 

\begin{table}[h!]
\begin{center}
\begin{tabular}{lcl}
    \toprule
    Parameter & Distribution & Values / Interval \\
    \hline
    Batch size & fixed & 512 \\
    Number of replay-streams & log-uniform & 25 \dots 100 \\
    Learning rate & log-uniform &  1e-4 \dots 3e-3 \\
    AdamW $\epsilon$ & log-uniform & 1e-4 \dots 1 \\
    EMA step size (Polyak averaging) & log-uniform & 1e-3 \dots 1e-2 \\ 
    Weight decay & log-uniform & 1e-4 \dots 1. \\
    Label smoothing & fixed & 0.01 \\
\end{tabular}
\label{tab:hyper_imagenet_fromcratch}
\end{center}
\end{table}

\subsection{Comparison of Switching Strategies}\label{app:switching}
We describe briefly other switching strategy we experimented with our evaluation framework:
\begin{itemize}
    \item Bayesian mixture: switching probability is 1 after the first step.
    \begin{align*}
        L_{BM}(y_{1:T} | \phi) = - \log \sum_{k} p(k)\prod_{t=1}^{T} p(y_t|\phi_t, \hat{\theta}_{k}(\phi_{<t}, y_{<t}))
    \end{align*}
    \item Elementwise mixture: at each step, it is allowed to switch to another model with a fixed probability which doesn't depend on the current model used.
    \begin{align*}
        L_{EW}(y_{1:T} | \phi)
    = - \log \sum_{\xi_{1:T}} \prod_{t=1}^{T} p(y_t|\phi_t, \hat{\theta}_{\xi_t}(\phi_{<t}, y_{<t})) w(\xi_t)
    \end{align*}
    Where $w(\xi_t)$ is a fixed mixture weight which doesn't depend on $\xi_{t-1}$.
    \item Switch distribution: A more sophisticated switching strategy which augments the transition with two latent variables $u$ and $s$ for unstable and stable chain respectively. Once switching to the stable chain, it stay with the same model for the rest of the sequence. For more details, see \cite{koolen2008combining}. 
\end{itemize}

\Cref{fig:regret_experts} compares the above switching strategies with the fixed share with decreasing switching rate on ImageNet using a ResNet50 pre-trained with BYOL. Bayesian mixture and elementwise mixture perform worse than fixed share by a large margin. Switch distribution is very close to fixed share performance.
\begin{figure}[h]
    \centering
    \includegraphics[width=0.5\textwidth]{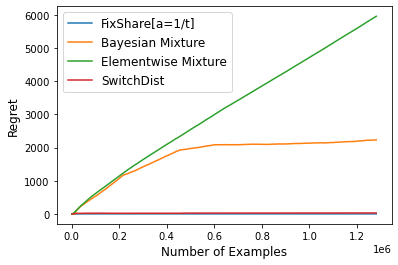}
    \caption{Regret of Bayesian mixture, elementwise mixture and switching distribution compared to the fixed share baseline on ImageNet. Pretrained model is BYOL with ResNet50 backbone.}
    \label{fig:regret_experts}
\end{figure}

\subsection{MDL on VTAB Datasets} \label{app:mdl_score}
We report the final MDL score using fixed share switching strategy on all the VTAB datasets in \autoref{tab:mdl_by_ds}. To better compare the numbers, we average the MDL score by the number of examples in the datasets. For cumulative MDL score, we should multiple by the number of examples which is reported in \autoref{tab:number_examples_vtab}. 
\setlength{\tabcolsep}{3.5pt}
\begin{table}[h]
    \centering\setlength\tabcolsep{2.5pt}
    \fontsize{7}{7}\selectfont 
    \begin{tabular}{llccccccccccccccccccc}\toprule
     & & \multicolumn{7}{c}{Natural} & \multicolumn{4}{c}{Special} & \multicolumn{8}{c}{Structured}
    \\\cmidrule(lr){3-9}\cmidrule(lr){10-13}\cmidrule(lr){14-21}
    
    \rot{Architecture} & \rot{Objective} & \rot{Caltech101} & \rot{Cifar100} & \rot{DTD} & \rot{Flowers102} & \rot{Pets} & \rot{SVHN} & \rot{Sun397} & \rot{EuroSAT} & \rot{PatchCamelyon} & \rot{Resisc45} & \rot{Retinopathy} & \rot{ClevrCount} & \rot{ClevrDistance} & \rot{DMLab} & \rot{DSpritesLocation} & \rot{DSpritesOrientation} & \rot{KittiDistance} & \rot{SmallNORBAzimuth} & \rot{SmallNORBElevation}\\
    \midrule
    ResNet-50&SUP & 0.81 & 1.12 & 1.35 & 1.23 & 0.35 & 0.82 & 1.42 & 0.14 & 0.17 & 0.45 & 0.72 & 0.91 & 0.87 & 0.90 & 0.14 & 0.26 & 0.58 & 0.51 & 0.88\\    
    ResNet-50&SimCLR & 0.98 & 1.33 & 1.33 & 1.30 & 0.74 & 0.66 & 1.54 & 0.18 & 0.20 & 0.46 & 0.73 & 0.90 & 0.79 & 0.92 & 0.08 & 0.25 & 0.58 & 0.29 & 0.77\\
    ResNet-50&BYOL & 0.78 & 0.97 & 1.21 & 1.06 & 0.48 & 0.69 & 1.32 & 0.13 & 0.17 & 0.39 & 0.70 & 0.78 & 0.77 & 0.77 & 0.13 & 0.16 & 0.55 & 0.24 & 0.65\\
    ResNet-101 & SUP & 0.76 & 1.02 & 1.31 & 1.27 & 0.34 & 0.95 & 1.38 & 0.14 & 0.18 & 0.48 & 0.73 & 0.89 & 0.91 & 0.91 & 0.13 & 0.23 & 0.59 & 0.52 & 0.92\\
    ResNet-101 & BYOL & 0.72 & 0.86 & 1.17 & 1.04 & 0.43 & 0.78 & 1.25 & 0.13 & 0.17 & 0.38 & 0.70 & 0.77 & 0.79 & 0.77 & 0.11 & 0.16 & 0.55 & 0.24 & 0.63\\
    ResNet-152 & SUP & 0.74 & 0.99 & 1.32 & 1.28 & 0.34 & 0.99 & 1.37 & 0.15 & 0.18 & 0.50 & 0.72 & 0.90 & 0.94 & 0.91 & 0.18 & 0.23 & 0.60 & 0.50 & 0.92\\
    ResNet-152 & BYOL & 0.71 & 0.84 & 1.15 & 1.07 & 0.42 & 0.79 & 1.22 & 0.13 & 0.17 & 0.40 & 0.70 & 0.77 & 0.77 & 0.78 & 0.10 & 0.15 & 0.55 & 0.25 & 0.66\\
    ViT/S16&SUP & 0.83 & 0.95 & 1.45 & 1.45 & 0.38 & 1.23 & 1.38 & 0.19 & 0.20 & 0.56 & 0.72 & 1.06 & 1.17 & 1.14 & 0.63 & 0.38 & 0.64 & 0.61 & 1.00\\
    ViT/S16&MAE & 1.83 & 2.01 & 1.76 & 2.10 & 1.54 & 1.64 & 2.27 & 0.17 & 0.15 & 0.64 & 0.71 & 0.97 & 1.16 & 1.09 & 1.75 & 0.27 & 0.64 & 0.59 & 0.69\\
    ViT/B16&SUP & 0.71 & 0.82 & 1.35 & 1.17 & 0.32 & 0.99 & 1.28 & 0.14 & 0.17 & 0.43 & 0.70 & 0.94 & 1.09 & 1.01 & 0.35 & 0.25 & 0.60 & 0.47 & 0.88\\
    ViT/B16&DINO & 0.68 & 0.68 & 1.14 & 0.94 & 0.37 & 0.76 & 1.15 & 0.11 & 0.13 & 0.34 & 0.68 & 0.78 & 0.94 & 0.72 & 0.17 & 0.19 & 0.55 & 0.28 & 0.57\\
    ViT/B16&MAE & 1.12 & 1.21 & 1.49 & 1.64 & 0.93 & 0.93 & 1.60 & 0.13 & 0.14 & 0.47 & 0.70 & 0.70 & 0.80 & 0.92 & 0.03 & 0.14 & 0.58 & 0.29 & 0.48\\
    ViT/L16&SUP & 0.81 & 0.88 & 1.52 & 1.41 & 0.37 & 1.06 & 1.40 & 0.18 & 0.18 & 0.52 & 0.72 & 0.96 & 1.05 & 1.04 & 0.32 & 0.27 & 0.61 & 0.59 & 0.91\\
    ViT/L16&MAE & 0.92 & 0.86 & 1.39 & 1.50 & 0.53 & 0.82 & 1.36 & 0.12 & 0.13 & 0.43 & 0.69 & 0.60 & 0.76 & 0.85 & 0.02 & 0.11 & 0.56 & 0.21 & 0.42\\
    \bottomrule
    \end{tabular}
    \caption{Fixed share codelength per example reported for each VTAB dataset.}
    \label{tab:mdl_by_ds}
\end{table}

\begin{figure}[h]
    \centering
    \includegraphics[width=0.48\textwidth]{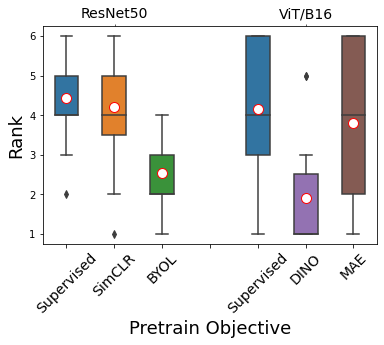}
    \caption{Ranking for representations pre-trained with different objectives (lower the better). The box plot is drawn from 25\% to 75\% quantile with a horizontal line to denote the median and red dot to denote the mean. The whisker denotes 1.5 IQR values. The threshold of the ranking difference to be significant for hypothesis testing is $1.894$.} \label{fig:avg_ranking}
\end{figure}

In VTAB, datasets in the same group are similar to each other, but there is a significant distribution shift between groups. And natrual datasets are closer to the ImageNet which is used during pre-training. To investigate if the transfer performance differs with input distribution shift, we report the average rankings by dataset group in \Cref{fig:avg_rankings_per_group_bar}. Although the number of datasets in each group is limited and, therefore, the result from hypothesis test is not conclusive, we can still get some idea of the transfer performance from the ranking difference. We can see that self-supervised representations have a better performance than the supervised counterpart on structured datasets, despite the latter perform decently on natural datasets. MAE generalizes well with dataset shift, but the performance on natural datasets suffers compared to other methods.
\ifdefined\twocolumns
\begin{figure}[h]
    \centering
    \subfloat[][Natural Datasets]{
    \includegraphics[width=0.3\linewidth]{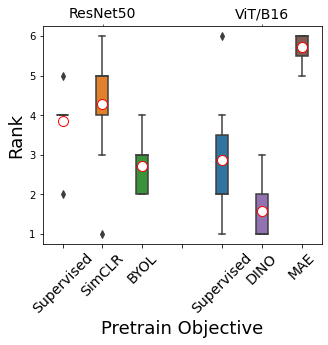}}
    \subfloat[][Special Datasets]{
    \includegraphics[width=0.3\linewidth]{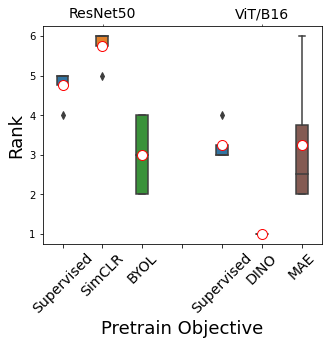}}
    \subfloat[][Structured Datasets]{
    \includegraphics[width=0.3\linewidth]{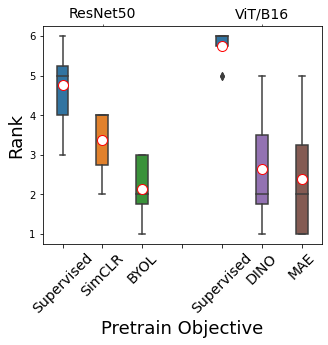}}
    \caption{Ranking on natural, special and structured VTAB datasets. The thresholds for ranking difference to be significant are $3.12$, $2.92$ and $4.13$ respectively.}
    \label{fig:avg_rankings_per_group_bar}
\end{figure}
\else
\begin{figure}[h]
    \centering
    \begin{subfigure}{0.32\textwidth}
    \centering
    \includegraphics[width=\textwidth]{imgs/compare_methods_natural_boxplot.png}
    \caption{Natural Datasets.} %
    \end{subfigure}
    \begin{subfigure}{0.32\textwidth}
    \centering
    \includegraphics[width=\textwidth]{imgs/compare_methods_special_boxplot.png}
    \caption{Special Datasets.} %
    \end{subfigure}
    \begin{subfigure}{0.32\textwidth}
    \centering
    \includegraphics[width=\textwidth]{imgs/compare_methods_structured_boxplot.png}
    \caption{Structured Datasets.} %
    \end{subfigure}
    \caption{Ranking on natrual, special and structured VTAB datasets. The thresholds for ranking difference to be significant are $3.12$, $2.92$ and $4.13$ respectively.}
    \label{fig:avg_rankings_per_group_bar}
\end{figure}
\fi
\begin{figure}[hbt!]
    \centering
    \includegraphics[width=0.7\textwidth]{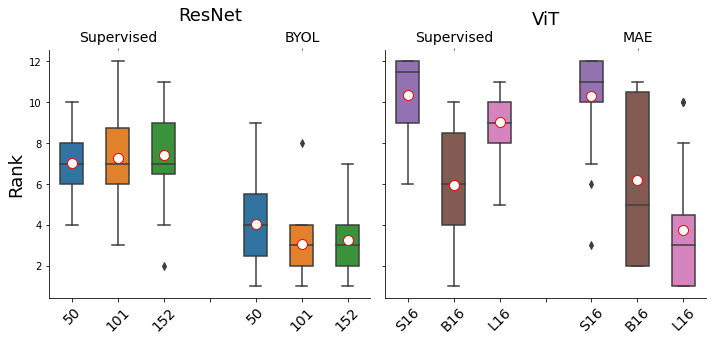}
    \caption{Rankings of ResNet and ViT models of different sizes (lower the better). The thresholds for ranking difference to be significant is $3.65$.}
    \label{fig:scaling}
\end{figure}

\paragraph{Data ordering and Variance of MDL score.}
Theoretically, different permutations of the data sequence results in different codelengths. We reckon that this is an area for future work. \cite{rissanen1986stochastic} suggests taking the permutation with the shortest codelength, but it is computationally too expensive. \cite{hansen2001model} notes that the codelength is invariant over orderings when a Bayesian predictor is used in prequential MDL. \cite{de2011luckiness} points out that while the ordering of data affects the codelength, it  does not appear to cause too much trouble in practice. Here, we provide an empirical evaluation of the variance of our metric. We evaluate ResNet50-SUP representation on Cifar100 with 5 permutations and obtain the MDL score of 55906 with a standard deviation of 134. To give an idea, the representations that are ranked immediately above and below achieve the MDL score of 50969 and 60549 respectively. Therefore, we believe our evaluation method is stable and reliable. We would also like to point out that the current practice of comparing the test accuracy is not immune to randomness, e.g. initialization, dataset split and randomization during training.

\paragraph{MDL Results on original VTAB Datasets}
\Cref{tab:mdl_by_org_ds} reports the MDL score on the original VTAB datasets, as well as the average rank. The distribution shift results in changes of the MDL score (colored in blue and red in \Cref{tab:mdl_by_org_ds}). The datasets where the distribution shift occur are consistent with our previous assessment in \Cref{app:shift}. However, the overall rank of the pre-training method doesn't seem to be affected. This is because the number of datasets affected by the shift is relatively small and the effect of the shift is similar to all pre-training methods.

\setlength{\tabcolsep}{3.5pt}
\begin{table}[h]
    \centering\setlength\tabcolsep{2.5pt}
    \fontsize{6}{6}\selectfont 
    \begin{tabular}{llccccccccccccccccccccc}\toprule
     & & & & \multicolumn{7}{c}{Natural} & \multicolumn{4}{c}{Special} & \multicolumn{8}{c}{Structured}
    \\\cmidrule(lr){5-11}\cmidrule(lr){12-15}\cmidrule(lr){16-23}
    
    \rot{Architecture} & \rot{Objective} & \#  & \rot{Avg Rank} & \rot{Caltech101} & \rot{Cifar100} & \rot{DTD} & \rot{Flowers102} & \rot{Pets} & \rot{SVHN} & \rot{Sun397} & \rot{EuroSAT} & \rot{PatchCamelyon} & \rot{Resisc45} & \rot{Retinopathy} & \rot{ClevrCount} & \rot{ClevrDistance} & \rot{DMLab} & \rot{DSpritesLocation} & \rot{DSpritesOrientation} & \rot{KittiDistance} & \rot{SmallNORBAzimuth} & \rot{SmallNORBElevation}\\
    \midrule
    ResNet-50 & SUP & 4 & 4.26 & \textcolor{blue}{0.85} & \textcolor{black}{1.12} & \textcolor{black}{1.35} & \textcolor{blue}{1.24} & \textcolor{red}{0.33} & \textcolor{blue}{0.84} & \textcolor{black}{1.42} & \textcolor{black}{0.14} & \textcolor{red}{0.15} & \textcolor{black}{0.45} & \textcolor{black}{0.71} & \textcolor{black}{0.91} & \textcolor{black}{0.87} & \textcolor{red}{0.82} & \textcolor{black}{0.14} & \textcolor{black}{0.26} & \textcolor{red}{0.56} & \textcolor{red}{0.49} & \textcolor{red}{0.85}\\
    ResNet-50 & SimCLR & 6 & 4.31 & \textcolor{blue}{1.08} & \textcolor{black}{1.34} & \textcolor{blue}{1.35} & \textcolor{red}{1.29} & \textcolor{red}{0.68} & \textcolor{blue}{0.67} & \textcolor{black}{1.55} & \textcolor{black}{0.18} & \textcolor{red}{0.18} & \textcolor{black}{0.45} & \textcolor{black}{0.72} & \textcolor{black}{0.90} & \textcolor{blue}{0.81} & \textcolor{red}{0.83} & \textcolor{black}{0.08} & \textcolor{black}{0.25} & \textcolor{red}{0.56} & \textcolor{black}{0.30} & \textcolor{red}{0.74}\\
    ResNet-50 & BYOL & 2 & 2.42 & \textcolor{blue}{0.82} & \textcolor{black}{0.98} & \textcolor{blue}{1.23} & \textcolor{black}{1.07} & \textcolor{red}{0.46} & \textcolor{blue}{0.70} & \textcolor{black}{1.33} & \textcolor{black}{0.13} & \textcolor{red}{0.15} & \textcolor{black}{0.38} & \textcolor{red}{0.69} & \textcolor{black}{0.78} & \textcolor{black}{0.77} & \textcolor{red}{0.69} & \textcolor{black}{0.13} & \textcolor{black}{0.16} & \textcolor{red}{0.54} & \textcolor{blue}{0.26} & \textcolor{red}{0.62}\\
    ViT/B16 & SUP & 4 & 4.26 & \textcolor{black}{0.71} & \textcolor{black}{0.83} & \textcolor{black}{1.36} & \textcolor{blue}{1.19} & \textcolor{red}{0.30} & \textcolor{blue}{1.02} & \textcolor{black}{1.29} & \textcolor{black}{0.14} & \textcolor{red}{0.15} & \textcolor{black}{0.42} & \textcolor{black}{0.69} & \textcolor{black}{0.94} & \textcolor{black}{1.09} & \textcolor{red}{0.93} & \textcolor{black}{0.35} & \textcolor{black}{0.25} & \textcolor{red}{0.59} & \textcolor{black}{0.47} & \textcolor{red}{0.84}\\
    ViT/B16 & DINO & 1 & 1.95 & \textcolor{blue}{0.69} & \textcolor{black}{0.69} & \textcolor{blue}{1.18} & \textcolor{blue}{0.95} & \textcolor{red}{0.34} & \textcolor{blue}{0.78} & \textcolor{black}{1.15} & \textcolor{black}{0.11} & \textcolor{red}{0.11} & \textcolor{black}{0.33} & \textcolor{red}{0.67} & \textcolor{black}{0.78} & \textcolor{black}{0.94} & \textcolor{red}{0.65} & \textcolor{black}{0.17} & \textcolor{black}{0.19} & \textcolor{red}{0.54} & \textcolor{blue}{0.29} & \textcolor{red}{0.54}\\
    ViT/B16 & MAE & 3 & 3.78 & \textcolor{blue}{1.23} & \textcolor{black}{1.22} & \textcolor{blue}{1.52} & \textcolor{black}{1.64} & \textcolor{red}{0.88} & \textcolor{black}{0.93} & \textcolor{black}{1.60} & \textcolor{black}{0.13} & \textcolor{red}{0.12} & \textcolor{black}{0.46} & \textcolor{black}{0.69} & \textcolor{black}{0.70} & \textcolor{black}{0.80} & \textcolor{red}{0.83} & \textcolor{black}{0.03} & \textcolor{black}{0.14} & \textcolor{red}{0.57} & \textcolor{black}{0.30} & \textcolor{red}{0.47}\\
    \bottomrule
    \end{tabular}
    \caption{Fixed share codelength per example reported for original VTAB benchmark where some of the datasets have a distribution shift. The number is color coded if the codelength changes more than $1e^{-2}$ compared to the reshuffled version of the dataset. Blue: codelength is longer than reshuffled dataset; red: codelength is shorter.}
    \label{tab:mdl_by_org_ds}
\end{table}

\subsection{Posterior of the readout models} 
\Cref{fig:posterior_heads_imagenet} shows the posterior of readout models $p(\xi_t=k|\mathcal{D}_{<t})$ where $k \in [1,\dots, K]$ on ImageNet for 4 different architecture and pre-training methods: ResNet50 + SimCLR, ResNet50 + BYOL, ViT/B16 + DINO and ViT/B16 + MAE. We use 4 readout models: linear layer and MLP with 1-3 hidden layers.

\begin{figure}[h!]
    \centering
    \includegraphics[width=\textwidth]{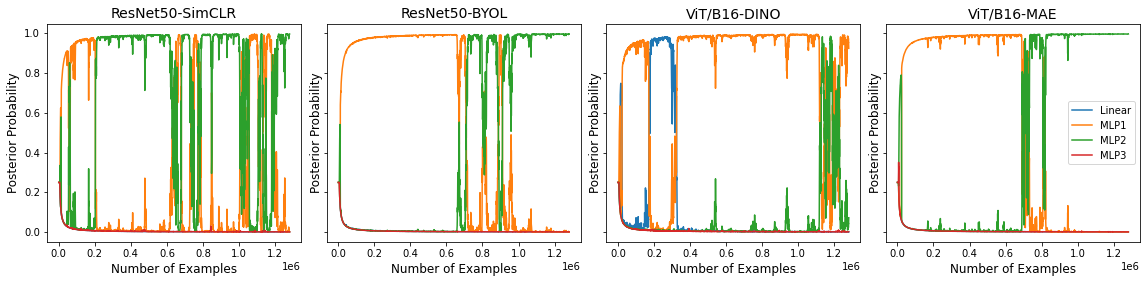}
    \caption{Posterior probability of the readout models at each timestep on ImageNet. Pre-trained methods are ResNet50 + SimCLR, ResNet50 + BYOL, ViT/B16 + DINO and ViT/B16 + MAE}
    \label{fig:posterior_heads_imagenet}
\end{figure}
\subsection{Data Efficiency and Fine-tuning} \label{app:data-efficiency}
\begin{figure}[h!]
    \centering
    \includegraphics[width=\textwidth]{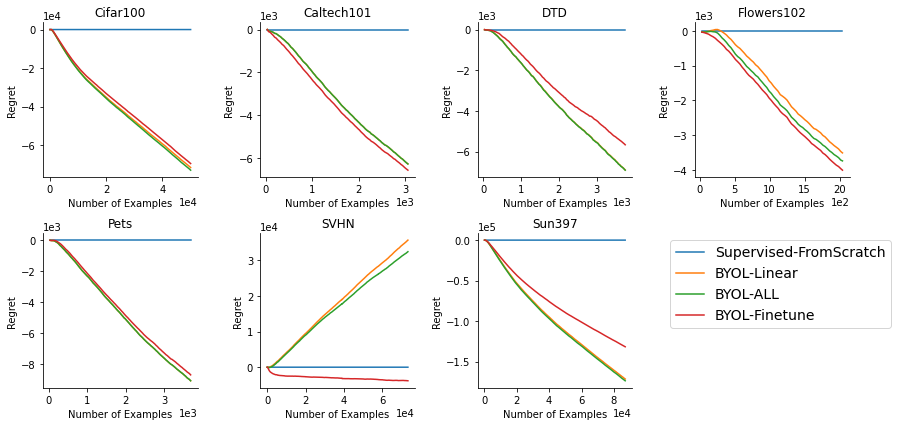}
    \includegraphics[width=\textwidth]{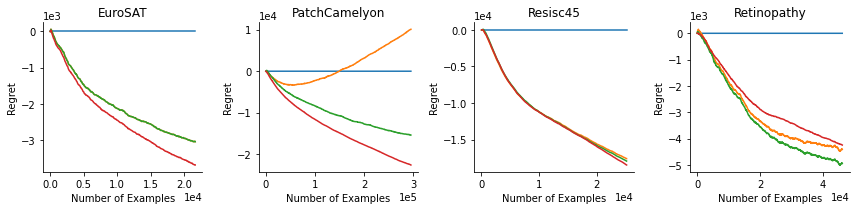}
    \includegraphics[width=\textwidth]{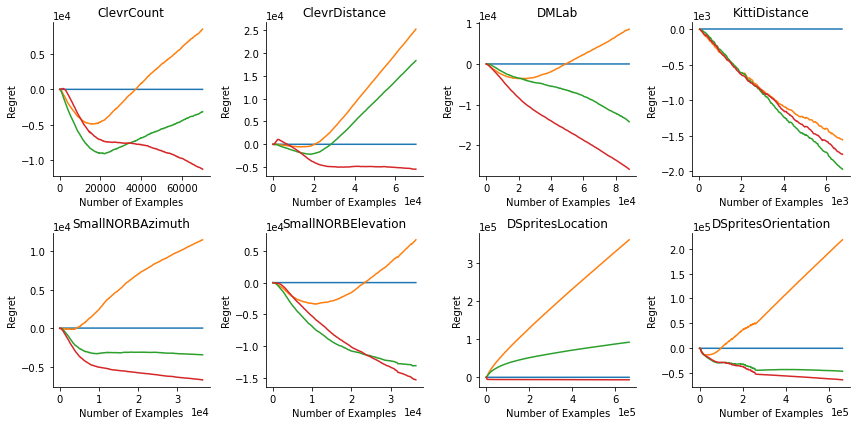}
    \caption{Data efficiency on 19 VTAB datasets. Four methods are compared: training from scratch, training a linear layer with frozen encoder initialized with BYOL, training MLPs with frozen encoder initialized with BYOL and fine-tuning the encoder initialized with BYOL. 
    We plot the cumulative next-step log loss relative to training from scratch as the baseline (in nats; lower is better).}
    \label{fig:data_efficiency_vtab}
\end{figure}

\Cref{fig:data_efficiency_vtab} shows the regret plots, relative to training from scratch, for training a linear layer on top of the pre-trained and frozen BYOL backbone, 
training MLPs (1-7 hidden layers) on top of the frozen BYOL backbone and for a linear readout but while fine-tuning all the encoder parameters.
The network architecture is a ResNet-50 in all cases.

\end{document}